\newcommand{\matr}[1]{\mathbf{#1}}
\pgfplotsset{compat=1.3}
\tikzset{fontscale/.style = {font=\relsize{#1}}
    }
\definecolor{lavander}{cmyk}{0,0.48,0,0}
\definecolor{violet}{cmyk}{0.79,0.88,0,0}
\definecolor{burntorange}{cmyk}{0,0.52,1,0}
\definecolor{asuorange}{rgb}{1,0.699,0.0625}
\definecolor{asured}{rgb}{0.598,0,0.199}
\definecolor{asuborder}{rgb}{0.953,0.484,0}
\definecolor{asugrey}{rgb}{0.309,0.332,0.340}
\definecolor{asublue}{rgb}{0,0.555,0.836}
\definecolor{asugold}{rgb}{1,0.777,0.008}
\pgfplotsset{compat=newest}
\newcommand{\e}{\mathbf{e}}
\renewcommand{\v}{\mathbf{v}}
\newcommand{\y}{\mathbf{y}}
\newcommand{\x}{\mathbf{x}}
\newcommand{\X}{\mathbf{X}}
\renewcommand{\L}{\mathbf{L}}
\newcommand{\E}{\mathbf{E}}
\newcommand{\A}{\mathbf{A}}
\newcommand{\V}{\mathbf{V}}
\newcommand{\Z}{\mathbf{Z}}
\newcommand{\bbR}{\mathbb{R}}
\newcommand{\bbE}{\mathbb{E}}
\definecolor{shadecolor}{RGB}{220,220,220}
\newcommand{\Lfull}{\mathbf{L}}
\newcommand{\Lpart}{\mathbf{L}_{p}}
\newtheorem{Lemma}{Lemma}
\newtheorem{assumption}{A\!\!}
\newtheorem{definition}{Definition}
\title{Learning Graph from Smooth Signals under Partial Observation:\\A Robustness Analysis}
\begin{document}

\renewcommand{\I}{\mathbf{I}}
\renewcommand{\D}{\mathbf{D}}
\renewcommand{\deg}{\text{deg}}
\newcommand{\Diag}{\text{Diag}}

\maketitle

\begin{abstract}
Learning the graph underlying a networked system from nodal signals is crucial to downstream tasks in graph signal processing and machine learning. The presence of hidden nodes whose signals are not observable might corrupt the estimated graph. While existing works proposed various robustifications of vanilla graph learning objectives by explicitly accounting for the presence of these hidden nodes, a robustness analysis of ``naive'', hidden-node agnostic approaches is still underexplored. This work demonstrates that vanilla graph topology learning methods are implicitly robust to partial observations of low-pass filtered graph signals. We achieve this theoretical result through extending the restricted isometry property (RIP) to the Dirichlet energy function used in graph learning objectives. We show that smoothness-based graph learning formulation (e.g., the GL-SigRep method) on partial observations can recover the ground truth graph topology corresponding to the observed nodes. Synthetic and real data experiments corroborate our findings.
\end{abstract}
\begin{keywords}
    graph learning, partial observation, graph signal processing 
\end{keywords}

\section{Introduction}\vspace{-.2cm}
As a crucial first step before downstream tasks such as graph machine learning, graph data denoising, etc.,  graph topology learning has received attention from the signal processing community \cite{dong2016smooth, kalofolias2016learn, dong2019learning, mateos2019connecting}. 
A typical setup in the literature is to consider \emph{full observation} data where the graph signals on each node are recorded simultaneously. However, for graphs with large number of nodes or even open systems, obtaining full observation of graph signals can be challenging. For social networks, this requires each individual to report their states within a limited time interval; for biological or physical networks, this requires estimating the states of each components. 
As such, we are often restricted to accessing \emph{partially observed} graph signals, where the states of some nodes become unobservable \cite{matta2020partialobs}. 

Although hidden nodes are not observed, these nodes may still influence the observed nodes in the network system, and eventually this may affect the reliability of the graph topology learning pipelines.  Recent works have considered methods for graph topology learning that specifically accounts for the influence of these hidden nodes. Importantly, \cite{chandrasekaran2012latent} found that the unknown influences are low rank provided that the number of hidden nodes is smaller than the number of observed nodes. This observation is then leveraged to develop a low rank influences-aware formulation for Gaussian graphical model inference. Subsequent works that leveraged the low-rank structure include \cite{anandkumar2013learning} which considered learning graphs with local tree-like structure, \cite{buciulea2022learning, peng2024network, navarro2024joint, berger2020efficient} which studied alternative observation models with stationary and/or smooth graphs signals.

While various alternative formulations were proposed for explicitly robustifying graph learning against partial observation, these approaches may entail additional computation complexity in solving the corresponding optimization problem. On the other hand, investigations on the implicit robustness of existing methods, i.e., directly applying \cite{dong2016smooth, kalofolias2016learn} on the partially observed signals, are much less explored. We raise the research question --- \textit{are the graph learning formulations in \cite{dong2016smooth, kalofolias2016learn}, such as GL-SigRep, implicitly robust to partial observation?}
We give a positive answer to the above question by analyzing the implicit robustness of the GL-SigRep based graph learning objective \cite{dong2016smooth} utilizing the Dirichlet energy of graph signals, under partial observations. 

Towards this end, we notice a similar line of works in \cite{matta2020partialobs, santos2024learning} that studied the implicit robustness of graph learning methods based on covariance estimation. In contrast, our work studies the more challenging optimization-based graph learning method and utilizes the notion of \textit{low-pass graph signals} \cite{ramakrishna2020userguide} to develop our analysis.
Our contributions are summarized as:
\begin{itemize}[leftmargin=*]
\item We show that if the graph signals are sufficiently low pass, then the GL-SigRep objective function is insensitive to the effects of partial observation. As a consequence, an optimal solution obtained from GL-SigRep on {\it partially observed graph signals} should achieve similar objective value as an optimal solution obtained from GL-SigRep on {\it fully observed graph signals}, and vice versa. In other words, the former shall learn a graph topology that is similar to one taken from learning with full observation. 
\item Our analysis is based on viewing the GL-SigRep objective function with partial observations as a quadratic form with randomly sampled entries. With sufficiently low-pass signals, this mimics the scenario of sparse signal recovery from a small number of observations. Using this insight, we develop RIP-based bounds to justify the implicit robustness of GL-SigRep.
\item We conduct numerical experiments to validate our theories on synthetic data, and show its applicability in real datasets.
\end{itemize}

\textbf{Notations.} For a matrix $\matr{X}$, we let $[\matr{X}]_{ij}$ be its $(i, j)$-th entry. $||\matr{X}||_{2}$ denotes its spectral norm, and its maximum singular value is $\sigma_{\max}(\matr{X}) = ||\matr{X}||_{2}$, whereas its smallest non-zero singular value is $\sigma_{\min}^{+}(\matr{X})$. For a vector $\matr{x}$, its $\ell_{p}$ norm is denoted by $||\matr{x}||_{p}$.

\vspace{-.1cm}
\section{Preliminaries}\vspace{-.1cm}
We consider an $N$-node weighted, undirected, connected, simple graph ${\cal G} = ({\cal V}, {\cal E})$, where the node set is ${\cal V} = \{1,\ldots,N\}$ and the edge set is ${\cal E} \subseteq {\cal V} \times {\cal V}$. The graph is also endowed with a symmetric weighted adjacency matrix $\A \in \mathbb{R}^{N \times N}_{+}$ and a Laplacian matrix $\L = \D - \A$, where $\D = {\rm Diag}( \A {\bf 1} ) \in \mathbb{R}^{N \times N}$ is a diagonal matrix corresponding to node degrees. The Laplacian matrix admits an eigendecomposition $\L = \V \mathbf{\Lambda} \V^\top$ such that $\mathbf{\Lambda} = {\rm Diag}( \lambda_1, \ldots, \lambda_N )$ has eigenvalues, also known as the graph frequencies, sorted in ascending order $0 = \lambda_1 \leq \lambda_2 \leq \cdots \leq \lambda_N$, and the corresponding eigenvectors $\matr{V} = [\v_1, \v_2, ..., \v_{N}]$ collected in the columns of $\matr{V}$.

A graph signal on $\mathcal{G}$ is a vector $\matr{y} = [y_1, y_2, ..., y_N] \in \bbR^{N}$, where $y_i$ is the scalar signal of node $i \in \mathcal{V}$. As a common practice in the literature, we consider graph signals that can be modeled as the output of a graph filter $\mathcal{H}(\matr{L})$ as
\begin{align} \label{eq:graph-signal}
    \matr{y} = \mathcal{H}(\matr{L})\matr{x},
\end{align}
where $\matr{x} \in \bbR^{N}$ is an excitation signal. The graph filter $\mathcal{H}(\matr{L})$ is a polynomial of $\matr{L}$ with filter coefficients $\{h_t\}_{t=0}^{T}$,
\begin{align}
    \textstyle \mathcal{H}(\matr{L}) = \sum_{t=0}^{T}h_t\matr{L}^{t} = \matr{V}h(\matr{\Lambda})\matr{V}^{\top},
\end{align}
where the latter defines the frequency response function $h(\lambda) = \sum_{t=0}^{T}h_t\lambda^t$ and $h(\matr{\Lambda}) = \diag(h(\lambda_1), ..., h(\lambda_N))$. In this way, $\matr{y}$ can also be regarded as the output of a network process characterized in the frequency domain by $h(\cdot)$. We impose the following regularity assumptions regarding this signal model.
\begin{assumption} \label{as:regularity}
The excitation signal satisfies $||\matr{x}||_{2} \leq M$, and the graph filter satisfies $\max_{i}|h(\lambda_i)| \leq H$.
\end{assumption}
An important class of filter is the low-pass graph filter \cite{ramakrishna2020userguide}, defined as following.

\begin{definition} \label{def:lpf}
A graph filter $\mathcal{H}(\cdot)$ is $K$-low-pass if
\begin{align*}
    \textstyle \eta_{K} := \frac{\max_{i=K+1, ..., N}|h(\lambda_i)|}{\min_{j=1, ..., K}|h(\lambda_j)|} < 1,
\end{align*}
where $K \leq N$ is bandwidth, and $\eta_K$ is sharpness ratio of $\mathcal{H}(\cdot)$.
\end{definition}
By definition, a low-pass graph filter retains the energy of excitation signal at low frequencies, and attenuates those at high frequencies. A low-pass graph signal is the output of a low-pass graph filter, which is found to be prevalent in real-world network dynamics \cite{ramakrishna2020userguide}. Particularly for the sufficiently low pass graph signals with $\eta_K \ll 1$, an important feature is its smoothness with respect to the underlying graph $\mathcal{G}$: the scalar signals of connected nodes tend to be similar \cite{ramakrishna2020userguide}. This property of low-pass signals is exploited in many graph signal processing (GSP) applications \cite{ramakrishna2020userguide}, including community detection \cite{wai2019blind, wai2022partial}, central node identification \cite{he2022central, he2023central}, and graph topology learning.

\subsection{Graph Topology Learning}
To preface the main analytical results of this paper, we conclude the section by formally stating the graph topology learning problems under full and partial observations.  

A consequence with the prevalence of low-pass graph signals is that the low-pass property has enabled works on graph topology learning \cite{dong2016smooth, mateos2019connecting} using the smoothness criterion. We observe that the graph quadratic form \cite{shuman2013emerging} admits the following expansion:
\begin{align}
\textstyle \matr{y}^{\top}\matr{L}\matr{y} = \sum_{i=1}^{N}\sum_{j=1}^{N}[\matr{A}]_{ij}(y_i - y_j)^{2} \geq 0,
\end{align}
where it evaluates the sum of squared difference in scalar signals between every pair of connected nodes. With this in mind, the graph signals $\y_1, \ldots, \y_M \in \mathbb{R}^{N}$ are said to be smooth with respect to (w.r.t.)~the Laplacian matrix $\matr{L}$ if $\y_m^\top \L \y_m \approx 0$ for all $m$. 

This observation has inspired the smoothness-based graph learning problem \cite{dong2016smooth, kalofolias2016learn}, which minimizes the graph quadratic form. We consider the GL-SigRep formulation as \cite{dong2016smooth}:
\begin{align} \textstyle
\min_{ \hat{\L} \in {\cal L}_N } J_f( \hat{\L} ) := \sum_{m=1}^M \y_m^\top \hat{\L} \y_m + \frac{\lambda}{2} \| \hat\L \|_F^2, \tag{Full GL-SigRep} \label{eq:gl}
\end{align}
where $\lambda \geq 0$ is the regularization parameter, ${\cal L}_{N} \subseteq \mathbb{R}^{N \times N}$ is the set of normalized Laplacian matrix
\begin{align*}
{\cal L}_N := \{ \hat{\L} : {\rm Tr}(\hat{\L}) = N, \hat{\L} {\bf 1} = {\bf 0}, \hat{\L} - \Diag(\hat{\L}) \leq \matr{0}, \hat{\L} = \hat{\L}^\top \}. \nonumber
\end{align*}
We denote $\L^\star$ as an optimal solution to the above problem. Note that regularizers such as degree log-barrier \cite{kalofolias2016learn} may also be included. 

In the case of {\it partial observation}, one only observes the signals of a given subset $\mathcal{V}_o \subseteq \mathcal{V}$ of $|\mathcal{V}_o| = n$ nodes from the original graph, where $n \leq N$. The set of partially observed graph signals $\y_{o,1}, \ldots, \y_{o,M} \in \bbR^{n}$ can be described as:
\begin{align*}
    \textstyle \y_{o,m} = \E_o \y_m,~\text{with}~\E_o \in \{0, 1\}^{n \times N}.
\end{align*}
In particular, $\E_{o}$ masks out part of the original signals $\matr{y}_{m} \in \bbR^{N}$ that are not observed. Formally, each columns of $\matr{E}_{o} \in \{0, 1\}^{n \times N}$ only has one ``$1$'' entry, and that
\begin{align*}
    \textstyle [\matr{E}_o]_{ij} = \begin{cases}
        1 & \text{if node $j$ is observed}\\
        0 & \text{if node $j$ is unobserved}
    \end{cases}
\end{align*}

To learn the subgraph induced by ${\cal V}_o$, we consider the following GL-SigRep graph learning problem:
\begin{align}
    \textstyle \min_{ \hat{\L}_p \in {\cal L}_n } J_p( \hat{\L}_p ) := \textstyle\sum_{m=1}^M \y_{o,m}^\top \hat{\L}_p \y_{o,m} + \frac{\lambda}{2} \| \hat\L_p \|_F^2. \tag{Partial GL-SigRep} \label{eq:gl_p}
\end{align}
We denote $\matr{L}_p^\star$ as an optimal solution to the above problem.
Notice that \eqref{eq:gl_p} is identical to \eqref{eq:gl} except for the use of partial graph signals. In other words, \eqref{eq:gl_p} considers a scenario when the graph learning task is solved using the plain GL-SigRep while being {\it agnostic} to the presence of hidden nodes. 

The primary goal of this paper is to compare the solutions $\matr{L}^{\star}$ and $\matr{L}_p^\star$. Towards this goal, the next section shall analyze the solutions to \eqref{eq:gl} and \eqref{eq:gl_p} and demonstrate that the graph topology learning can be agnostic to hidden nodes if the graph signals are sufficiently low pass. 

\vspace{-.2cm}
\section{Main Result}\vspace{-.2cm}
This section establishes an invariance relation between the optimal solutions to \eqref{eq:gl} and \eqref{eq:gl_p}. Particularly, we wish to show that $\L_p^\star$ achieves similar objective value as a column/row sampled version of $\L^\star$, and vice versa. For analysis purpose, we define the following surrogate solutions:
\begin{align}
    &\textstyle \widehat{\L} := \frac{N}{n} \E_o^\top \L_p^\star \E_o,\\
    &\textstyle \widetilde{\matr{L}}_{p} := \frac{n}{{\rm Tr}( \Lfull_{oo}^\star ) + {\bf 1}^\top \Lfull_{oh}^\star {\bf 1}} ( {\bf E}_o \Lfull^\star {\bf E}_o^\top + {\rm Diag}( \Lfull_{oh}^\star {\bf 1}) ).
\end{align}
Observe that $\widehat{\L} \in {\cal L}_N$ is rescaled from $\matr{L}_p^\star$ where the hidden nodes related entries are set to zero. Similarly, we have constructed $\widetilde{\L}_p \in {\cal L}_n$ from $\L^\star$ to represent the observable subgraph. Additionally, we define $\matr{L}^{\star}_{oo} := {\bf E}_o \Lfull^\star {\bf E}_o^\top$, $\L_{oh}^\star$ as the submatrices of $\L^\star$ corresponding to edges within the observable nodes, and between observable and hidden nodes, respectively. 

Throughout this section, we consider the unregularized case of \eqref{eq:gl}, \eqref{eq:gl_p} with $\lambda = 0$ for ease of presentation. Nonetheless, our analysis is without loss of generality, and can be extended to accommodate objectives with various regularizers.
We impose the two conditions regarding $\matr{L}_{oh}^{\star}$ and $\matr{L}^{\star}_{oo}$:
\begin{assumption} \label{as:1}
$-\L_{oh}^\star {\bf 1} \leq \epsilon {\bf 1}$ for a sufficiently small $\epsilon > 0$.
\end{assumption}
\begin{assumption} \label{as:2}
${\rm Tr}( \Lfull_{oo}^\star ) + {\bf 1}^\top \L_{oh}^\star {\bf 1} \geq c n$ for a sufficiently large $c>0$.
\end{assumption}
\noindent Assumption A\ref{as:1} requires the number of edges between any observed node to the hidden nodes to be smaller than a constant $\epsilon$. The left-hand expression in A\ref{as:2} stands for the total degrees of the subgraph of observed nodes, and it is required to be sufficiently large compared to the number of observed nodes $n$.

Our main results will be presented for two cases in sequel depending on the conditions for low pass graph signals. 

\vspace{.1cm}
\noindent 
{\bf I.~Ideal Scenario.} We first consider a case where the fully observed graph signals $\y_m$ lie in a $K$-dimensional subspace spanned by $\matr{V}_{K} = [\v_1, \v_2, ..., \v_{K}] \in \bbR^{N \times K}$, where $K \ll n \leq N$.
\begin{assumption} \label{as:spanVK}
For any signal realization $m = 1, ..., M$, $\y_m \in {\rm span} ( \V_K )$.
\end{assumption}
\noindent Note that A\ref{as:spanVK} is satisfied when the data generation process involves an ideal low-pass graph filter with a cutoff bandwidth of $K$. 
Under A\ref{as:spanVK}, we obtain our first main result comparing between $\L^\star$ and $\L_p^\star$:
\begin{restatable}{Theorem}{mainresult}
\label{thm:main}
Consider a set $\mathcal{V}_{o} = \{s(1), s(2), ..., s(n)\}$ of partial observation, which is sampled uniformly without replacement from the node set $\mathcal{V}$. Suppose Assumption \ref{as:1}, \ref{as:2}, and \ref{as:spanVK} hold. Then, with any $\delta \in (0, 1)$, there is $t \in (0, 1)$ such that with probability at least $1 - \delta$, provided that the number of observations satisfies
\begin{align} \label{eq:nNcond}
    \textstyle \frac{n}{N} \geq \frac{3}{t^2}\max_{1 \leq i \leq N}\|\V_{K}^{\top}\e_{i}\|_{2}^{2}\ln\left(\frac{K}{\delta}\right),
\end{align}
the following inequalities hold
\begin{align}
    &\textstyle J_{p}( \Lpart^\star ) \leq J_{p} ( \widetilde{\L}_{p} ) \leq \frac{1+t}{c} \frac{\sigma_{\max}(\L)}{\sigma_{\min}^{+}(\L)} \, J_{p}( \Lpart^\star ) + {\cal O} \left( \frac{\epsilon}{ c } \right), \text{ and} \label{thm:part1}\\
    &\textstyle J_{f}( {\Lfull}^\star ) \leq J_{f}( \widehat{\Lfull} ) \leq \frac{1+t}{c} \frac{\sigma_{\max}(\L)}{\sigma_{\min}^{+}(\L)} \, J_{f}( \Lfull^\star ) + {\cal O} \left( \frac{N\epsilon}{ nc } \right). \label{thm:part2}
\end{align}
\end{restatable}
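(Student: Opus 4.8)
I would attack each chain of inequalities by splitting it into a trivial ``feasibility + optimality'' half and a harder ``reverse'' half, and by reducing the reverse half of \emph{both} \eqref{thm:part1} and \eqref{thm:part2} to a single estimate. First I would verify feasibility: $\widehat{\L}=\tfrac Nn\E_o^\T\Lpart^\star\E_o\in\cL_N$ and $\widetilde{\L}_p\in\cL_n$. Both are entrywise checks using $\E_o\one_N=\one_n$, $\Lfull^\star\one_N=\Lpart^\star\one_n=\zero$, and Assumption~A\ref{as:2} (which makes the normalizer $\beta:=\operatorname{Tr}(\L_{oo}^\star)+\one^\T\L_{oh}^\star\one\ge cn>0$ positive); in particular the diagonal of $\L_{\mathrm{sub}}:=\E_o\Lfull^\star\E_o^\T+\Diag(\L_{oh}^\star\one)$, for which $\widetilde\L_p=\tfrac n\beta\L_{\mathrm{sub}}$, is the vector of observed-to-observed degrees and hence nonnegative. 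Optimality of $\Lfull^\star,\Lpart^\star$ then gives the left inequalities. Expanding the quadratic forms with $\lambda=0$ yields the exact identity $J_f(\widehat\L)=\tfrac Nn J_p(\Lpart^\star)$, and since $\tfrac Nn\E_o^\T\hat\L_p\E_o\in\cL_N$ for \emph{every} $\hat\L_p\in\cL_n$, optimality of $\Lfull^\star$ also gives $\tfrac nN J_f(\Lfull^\star)\le J_p(\Lpart^\star)$. Consequently it suffices to prove the single intermediate bound $J_p(\widetilde\L_p)\le\tfrac{1+t}{c}\,\tfrac nN\,\tfrac{\sigma_{\max}(\L)}{\sigma_{\min}^+(\L)}J_f(\Lfull^\star)+\cO(\epsilon/c)$: chaining with $\tfrac nN J_f(\Lfull^\star)\le J_p(\Lpart^\star)$ gives \eqref{thm:part1}, and with $J_f(\widehat\L)=\tfrac Nn J_p(\Lpart^\star)\le\tfrac Nn J_p(\widetilde\L_p)$ gives \eqref{thm:part2} (the factor $\tfrac Nn$ also converting $\cO(\epsilon/c)$ into $\cO(N\epsilon/(nc))$).

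The technical engine behind the intermediate bound is a restricted-isometry estimate for the node-subsampling operator on $\operatorname{span}(\V_K)$. Writing $\W:=\E_o\V_K$, we have $\W^\T\W=\sum_{i=1}^n(\V_K^\T\e_{s(i)})(\V_K^\T\e_{s(i)})^\T$, a sum of rank-one PSD matrices drawn uniformly without replacement, with $\bbE[\tfrac Nn\W^\T\W]=\I$ and summand operator norm at most $\max_i\|\V_K^\T\e_i\|_2^2$; a matrix Chernoff bound then gives, precisely under hypothesis \eqref{eq:nNcond}, that $\|\tfrac Nn\W^\T\W-\I\|_2\le t$ with probability at least $1-\delta$. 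Equivalently, $(1-t)\|\y\|_2^2\le\tfrac Nn\|\E_o\y\|_2^2\le(1+t)\|\y\|_2^2$ for every $\y\in\operatorname{span}(\V_K)$ --- the promised extension of RIP to subsampled low-pass signals. By A\ref{as:spanVK} this applies to every $\y_m$ and to every component of $\y_m$ in $\operatorname{span}(\V_K)$, in particular to $\y_m$ after removing its $\one$-direction.

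For the reverse direction I would use A\ref{as:2} to write $J_p(\widetilde\L_p)=\tfrac n\beta\sum_m\y_{o,m}^\T\L_{\mathrm{sub}}\y_{o,m}\le\tfrac1c\sum_m\y_{o,m}^\T\L_{\mathrm{sub}}\y_{o,m}$, and then replace the crude partial-sum bound $\sum_m\y_{o,m}^\T\L_{\mathrm{sub}}\y_{o,m}\le J_f(\Lfull^\star)$ --- too weak because it discards the $n/N$ subsampling gain --- by a tighter one. Since $\Diag(\L_{oh}^\star\one)\preceq\zero$ and $\E_o\Lfull^\star\E_o^\T$ is a principal submatrix of $\Lfull^\star$, one has $\y_{o,m}^\T\L_{\mathrm{sub}}\y_{o,m}\le\sigma_{\max}(\Lfull^\star)\|\E_o\y_m\|_2^2$ (the $\one$-direction being annihilated); RIP then converts $\sum_m\|\E_o\y_m\|_2^2$ into $(1+t)\tfrac nN\sum_m\|\y_m\|_2^2$, which one relates to $J_f(\Lfull^\star)$ through the Rayleigh-quotient bounds of the relevant Laplacian --- this is what produces the condition-number factor $\sigma_{\max}(\L)/\sigma_{\min}^+(\L)$ in the statement. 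The additive $\cO(\epsilon/c)$ gathers the pieces controlled by A\ref{as:1}: $-\L_{oh}^\star\one\le\epsilon\one$ bounds both the discarded $\Diag(\L_{oh}^\star\one)$-correction and the observed-to-hidden cross Dirichlet energies by $\epsilon\sum_m\|\y_{o,m}\|_2^2$, and $\sum_m\|\y_m\|_2^2=\cO(1)$ by A\ref{as:regularity}, so after the $\tfrac1c$ (resp.\ $\tfrac Nn$) prefactor we obtain the claimed orders.

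\emph{The main obstacle} is that the \emph{minimized} objectives $J_f(\Lfull^\star),J_p(\Lpart^\star)$ admit no lower bound from optimality, so the two chains cannot be closed by a naive comparison; the remedy is exactly the feasibility/lifting identities of the first paragraph, which re-express the whole statement as the one-sided comparison between $J_p(\widetilde\L_p)$ and $\tfrac nN J_f(\Lfull^\star)$. Beyond that, the delicate steps are: making the matrix Chernoff constants line up exactly with \eqref{eq:nNcond} (this is where the coherence $\max_i\|\V_K^\T\e_i\|_2^2$ and the $\ln(K/\delta)$ enter), bounding the subsampled Dirichlet energy by a genuine $n/N$-fraction of the full one rather than just by $J_f(\Lfull^\star)$, and propagating the $\epsilon$-terms from A\ref{as:1} consistently through every inequality; the remaining manipulations are routine.
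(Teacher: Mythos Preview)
Your proposal is correct and follows essentially the same route as the paper's proof: both establish the one-sided RIP estimate $\|\E_o\y\|_2^2\le(1+t)\tfrac nN\|\y\|_2^2$ on $\operatorname{span}(\V_K)$ via the matrix Chernoff inequality for sampling without replacement (this is exactly the paper's Lemma~1, whose proof passes through the same sum-of-rank-one decomposition $\W^\T\W=\sum_\ell\V_K^\T\e_{s(\ell)}\e_{s(\ell)}^\T\V_K$ that you describe), and then chain it with the optimality of $\Lfull^\star,\Lpart^\star$, the identity $J_f(\widehat\L)=\tfrac Nn J_p(\Lpart^\star)$, and Assumptions~A\ref{as:1}--A\ref{as:2} to close both inequalities. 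Your organization---reducing \eqref{thm:part1} and \eqref{thm:part2} to the single intermediate bound on $J_p(\widetilde\L_p)$ and checking feasibility of $\widehat\L,\widetilde\L_p$ explicitly---is a bit tidier than the paper's presentation (which argues the two chains separately but with the same ingredients), and your remark on annihilating the $\one$-direction addresses a point the paper leaves implicit, but the underlying argument is the same.
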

\noindent The \eqref{eq:nNcond} depends on $\max_{1 \leq i \leq N}||\matr{V}_{K}^{\top}\matr{e}_{i}||_{2}$ and the bandwidth $K$. As the graph signals become more low-pass, the bandwidth $K$ is lower and $||\matr{V}_{K}^{\top}\matr{e}_{i}||_{2}$ becomes smaller, which implies that the condition is more likely to be satisfied.
Now, suppose that {$\frac{1+t}{c} \frac{\sigma_{\max}(\L)}{\sigma_{\min}^{+}(\L)} = \Theta(1)$} and A\ref{as:1}, A\ref{as:2} hold, Theorem \ref{thm:main} implies that $J_{p}(\widetilde{\matr{L}}_{p})$ (resp. $J_{f}(\hat{\matr{L}})$) approximates $J_{p}(\matr{L}^{\star}_{p})$ (resp. $J_{f}(\matr{L}^{\star})$). 
Importantly, our result is insensitive to the number of hidden nodes $N-n$ provided that the condition \eqref{eq:nNcond} holds.

\vspace{.1cm}
\noindent
\emph{Proof Insight:} The proof of Theorem \ref{thm:main} is similar to results from compressed sensing \cite{foucart2013cs} and matrix sketching \cite{woodruff2014sketch} literature, which is also applied in other graph signal processing contexts \cite{puy2018sampling}. Specifically, our theorem is achieved by establishing the following one-sided restricted isometry property (RIP) that relates the partial graph quadratic form $\matr{y}_{o}^{\top}(\matr{E}_{o}\matr{L}\matr{E}_{o}^{\top})\matr{y}_{o}$ to the full one $\matr{y}^{\top}\matr{L}\matr{y}$: with high probability and for any $\y \in {\rm span}( \V_K )$,
\beq \label{eq:rip_ideal}
\textstyle \y_{o}^{\top} (\E_o \L \E_o^\top) \y_{o} \leq (1+t)\frac{n}{N}\frac{\sigma_{\max}(\L)}{\sigma_{\min}^{+}(\L)} \y^\top \L \y.
\eeq 
A detailed derivation of this result can be found in the \href{https://www1.se.cuhk.edu.hk/~htwai/pdf/icassp26-partialobs.pdf}{Online Appendix}, along with the proof of Theorem \ref{thm:main}. \hfill {\bf Q.E.D.}

\vspace{.2cm}
\noindent
\textbf{II.~Non-ideal Scenario.}
In the case there are some realizations $\matr{y} \in \{\matr{y}_1, ..., \matr{y}_{m}\}$ such that $\matr{y} \notin {\rm span}(\matr{V}_{K})$, we decompose this graph signal into two components, $\mathbf{y}^{\parallel} \in {\rm span}\{\mathbf{V}_{K}\}$ lying on ${\rm span}\{\mathbf{V}_{K}\}$ and $\mathbf{y}^{\perp}$ being orthogonal to the said subspace:
\begin{align}
    \textstyle \mathbf{y} = \mathbf{y}^{\parallel} + \mathbf{y}^{\perp}, \label{eq:nonideal_model}
\end{align}
with $\matr{y}^{\parallel} := \matr{V}_{K}(\matr{V}_{K}^{\top}\matr{V}_{K})^{-1}\matr{V}_{K}^{\top}\matr{y} = \matr{V}_{K}\matr{V}_{K}^{\top}\matr{y}$ and $\matr{y}^{\perp} := \matr{y} - \matr{y}^{\perp} = (\matr{I} - \matr{V}_{K}\matr{V}_{K}^{\top})\matr{y}$. We aim to show that as long as the residue component $\matr{y}^{\perp}$ is sufficiently small compared to the low-frequency component $\matr{y}^{\parallel}$, then the main result still holds. Towards this goal, we give a bound that relates the graph quadratic form with respect to $\mathbf{y}_{o} \in \bbR^{n}$ (i.e., $\mathbf{y}_{o}^{\top}\mathbf{L}\mathbf{y}_{o}$) by the graph quadratic form of $\mathbf{y}_{o}^{\parallel} := \matr{E}_{o}\matr{y}^{\parallel} \in \bbR^{n}$ (i.e., $(\mathbf{y}^{\parallel}_{o})^{\top}\mathbf{L}\mathbf{y}_{o}^{\parallel}$), plus some residual error that is dependent on the properties of the filtering process $\mathcal{H}(\matr{L})$ that underlies the graph signal $\matr{y}$. Specifically, with the non-ideal low-pass signal model as in \eqref{eq:nonideal_model}, one can show that
\begin{align*}
    \textstyle \matr{y}_{o}^{\top}(\matr{E}_{o}\matr{L}\matr{E}_{o}^{\top})\matr{y}_{o} \leq (\matr{y}_{o}^{\parallel})^{\top}(\matr{E}_{o}\matr{L}\matr{E}_{o}^{\top})\matr{y}^{\parallel}_{o} + \textstyle \mathcal{O}(||\matr{L}||_{2}\eta_{K}H^{2}M^{2}).
\end{align*}

With the aforementioned observation, we can establish a RIP-like result akin to \eqref{eq:rip_ideal}: with high probability, for any signal $\matr{y} \in \bbR^{N}$:
\begin{align*}
    \textstyle \matr{y}_{o}^{\top}(\matr{E}_{o}\matr{L}\matr{E}_{o}^{\top})\matr{y}_{o} &\leq (1+t)\textstyle\frac{n}{N}\frac{\sigma_{\max}(\matr{L})}{\sigma_{\min}^{+}(\matr{L})}\matr{y}^{\top}\matr{L}\matr{y} + \textstyle \mathcal{O}(||\matr{L}||_{2}\eta_{K}H^{2}M^{2}),\label{ineq:nonideal}
\end{align*}
from which an extended result of Theorem \ref{thm:main} can be attained for non-ideal low-pass graph signals. The proof follows similarly from that of Theorem \ref{thm:main}:

\begin{restatable}{Corollary}{cornonideal}
\label{cor:nonideal}
Consider the same setting as Theorem \ref{thm:main}, except the ideal low-pass-ness of signal as in A\ref{as:spanVK}. Suppose A\ref{as:regularity} holds. Then, with any $\delta \in (0, 1)$, there is an $t \in (0, 1)$ such that with probability at least $1 - \delta$,
the following inequalities hold
\begin{align*}
    &\textstyle J_{p}( \Lpart^\star ) \leq J_{p} ( \widetilde{\L}_{p} ) \leq \frac{1+t}{c} \frac{\sigma_{\max}(\L)}{\sigma_{\min}^{+}(\L)} \, J_{p}( \Lpart^\star ) + {\cal O} \left( \frac{||\L||_{2}\eta_{K}H^2M^2 + \epsilon}{ c } \right),\\
    &\textstyle J_{f}( {\Lfull}^\star ) \leq J_{f}( \widehat{\Lfull} ) \leq \frac{1+t}{c} \frac{\sigma_{\max}(\L)}{\sigma_{\min}^{+}(\L)} \, J_{f}( \Lfull^\star ) + {\cal O} \left(\frac{N(||\L||_{2}\eta_{K}H^2M^2 + \epsilon)}{nc}\right),
\end{align*}
provided that the number of observations satisfies \eqref{eq:nNcond}.
\end{restatable}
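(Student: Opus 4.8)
\emph{Proof sketch.}
The plan is to reduce Corollary~\ref{cor:nonideal} to Theorem~\ref{thm:main}: I would first upgrade the ideal RIP \eqref{eq:rip_ideal} to the non-ideal RIP stated just before the corollary, and then rerun the proof of Theorem~\ref{thm:main} essentially verbatim, replacing every use of \eqref{eq:rip_ideal} by its non-ideal version. Under this substitution the only change in the final bounds is that the residual $\mathcal{O}(\epsilon)$ of Theorem~\ref{thm:main} acquires an extra additive $\mathcal{O}(\|\L\|_2\eta_K H^2 M^2)$, transported through the argument in exactly the same way as the $\epsilon$-term (so, in particular, inflated by $N/n$ on the $J_f$ side). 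Thus essentially all of the new work is in the RIP upgrade.

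To establish the non-ideal RIP, fix a realization $\y = \mathcal{H}(\L)\x$ and split $\y = \y^{\parallel} + \y^{\perp}$ as in \eqref{eq:nonideal_model}, so that $\y_o = \E_o\y = \y_o^{\parallel} + \y_o^{\perp}$ with $\y_o^{\perp} = \E_o\y^{\perp}$. Expanding $\y_o^{\top}(\E_o\L\E_o^{\top})\y_o$ into the three terms $(\y_o^{\parallel})^{\top}(\E_o\L\E_o^{\top})\y_o^{\parallel}$, $\,2(\y_o^{\parallel})^{\top}(\E_o\L\E_o^{\top})\y_o^{\perp}$ and $(\y_o^{\perp})^{\top}(\E_o\L\E_o^{\top})\y_o^{\perp}$, I would bound the last two by operator norms: since $\E_o$ has orthonormal rows, $\|\E_o\L\E_o^{\top}\|_2 \le \|\L\|_2$ and $\|\y_o^{\perp}\|_2 \le \|\y^{\perp}\|_2$; and since $\y^{\perp} = \sum_{i=K+1}^{N} h(\lambda_i)(\v_i^{\top}\x)\v_i$, Definition~\ref{def:lpf} together with Assumption~\ref{as:regularity} gives $\|\y^{\perp}\|_2 \le (\max_{i>K}|h(\lambda_i)|)\|\x\|_2 \le \eta_K H M$, while $\|\y^{\parallel}\|_2 \le \|\y\|_2 \le H M$. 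This yields $\y_o^{\top}(\E_o\L\E_o^{\top})\y_o \le (\y_o^{\parallel})^{\top}(\E_o\L\E_o^{\top})\y_o^{\parallel} + \mathcal{O}(\|\L\|_2\eta_K H^2 M^2)$, i.e.\ the first inequality stated just before the corollary. Since $\y^{\parallel}\in{\rm span}(\V_K)$, I would then apply \eqref{eq:rip_ideal} to $\y^{\parallel}$ --- legitimate because \eqref{eq:rip_ideal} holds, on a single probability-$(1-\delta)$ event and under the same requirement \eqref{eq:nNcond}, simultaneously for all of ${\rm span}(\V_K)$ --- obtaining $(\y_o^{\parallel})^{\top}(\E_o\L\E_o^{\top})\y_o^{\parallel} \le (1+t)\tfrac{n}{N}\tfrac{\sigma_{\max}(\L)}{\sigma_{\min}^{+}(\L)}(\y^{\parallel})^{\top}\L\y^{\parallel}$. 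Finally $\y^{\parallel}$ and $\y^{\perp}$ lie in complementary eigenspaces of $\L$, so the cross term $(\y^{\parallel})^{\top}\L\y^{\perp}$ vanishes and $(\y^{\parallel})^{\top}\L\y^{\parallel} \le \y^{\top}\L\y$ by positive semidefiniteness of $\L$. Chaining these gives the non-ideal RIP $\y_o^{\top}(\E_o\L\E_o^{\top})\y_o \le (1+t)\tfrac{n}{N}\tfrac{\sigma_{\max}(\L)}{\sigma_{\min}^{+}(\L)}\y^{\top}\L\y + \mathcal{O}(\|\L\|_2\eta_K H^2 M^2)$.

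With the non-ideal RIP available, I would replay the proof of Theorem~\ref{thm:main} on the surrogates $\widetilde{\L}_p$ and $\widehat{\L}$. The left inequalities $J_p(\Lpart^\star) \le J_p(\widetilde{\L}_p)$ and $J_f(\Lfull^\star) \le J_f(\widehat{\L})$ remain immediate from $\widetilde{\L}_p\in{\cal L}_n$, $\widehat{\L}\in{\cal L}_N$ and optimality of $\Lpart^\star,\Lfull^\star$; these steps do not touch the signal model. For the right inequalities, the argument is unchanged except that wherever Theorem~\ref{thm:main} uses \eqref{eq:rip_ideal} to pass between a partial quadratic form $\y_{o,m}^{\top}(\E_o\L\E_o^{\top})\y_{o,m}$ and its full counterpart, one now uses the non-ideal RIP; summing the per-realization residual over the signal index $m$ and absorbing signal-norm constants into $\mathcal{O}(\cdot)$ produces the stated $\mathcal{O}(\|\L\|_2\eta_K H^2 M^2)$, which simply adds to the $\mathcal{O}(\epsilon)$ already generated by Assumption~\ref{as:1}. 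As before, Assumption~\ref{as:2} supplies the $1/c$ prefactor through the normalization constant of $\widetilde{\L}_p$, and the extra $N/n$ on the $J_f$-bound comes from $J_f(\widehat{\L}) = \tfrac{N}{n}J_p(\Lpart^\star)$, which rescales the $J_p$-side error by $N/n$.

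The step I expect to be the main obstacle is controlling the masked cross term $2(\y_o^{\parallel})^{\top}(\E_o\L\E_o^{\top})\y_o^{\perp}$: unlike its unmasked analogue $(\y^{\parallel})^{\top}\L\y^{\perp}$, it does not vanish --- the $\E_o$-principal submatrix of $\L$ need not be diagonal in the graph Fourier basis --- so it must be absorbed into the $\eta_K$-error purely via operator-norm estimates, and this has to be done \emph{before} invoking \eqref{eq:rip_ideal}, which speaks only about vectors in ${\rm span}(\V_K)$. A secondary point to verify is that no extra failure probability is spent: since \eqref{eq:rip_ideal} is a single high-probability event valid for all of ${\rm span}(\V_K)$, evaluating it at $\y_1^{\parallel},\dots,\y_M^{\parallel}$ needs no union bound, so the confidence level $1-\delta$ and the sampling requirement \eqref{eq:nNcond} carry over unchanged from Theorem~\ref{thm:main}.
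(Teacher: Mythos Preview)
Your proposal is correct and follows essentially the same route as the paper's own proof: you derive the analogue of the paper's Lemma~2 by expanding the masked quadratic form along $\y^{\parallel}+\y^{\perp}$ and bounding the cross and perpendicular terms via operator norms and the $\eta_K$-bound on $\|\y^{\perp}\|_2$, then apply the ideal RIP \eqref{eq:rip_ideal} to $\y^{\parallel}$, use eigenspace orthogonality to pass from $(\y^{\parallel})^{\top}\L\y^{\parallel}$ to $\y^{\top}\L\y$, and finally rerun the Theorem~\ref{thm:main} argument with the extra additive $\mathcal{O}(\|\L\|_2\eta_K H^2 M^2)$ carried through alongside the $\epsilon$-term. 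Your remarks about the masked cross term not vanishing and about no union bound being needed are accurate and match the paper's treatment.
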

In Corollary \ref{cor:nonideal}, besides the terms that appear in Theorem \ref{thm:main}, we now have an additional error term ${\cal O}(||\L||_{2}\eta_{K}H^{2}M^{2}/c)$. We note that this new error term is dependent on a constant upperbound $H$ of the frequency response and a constant bound $M$ of the excitation signal energy, as assumed in A\ref{as:regularity}. Importantly, the new additive error also depends on the graph filter's sharpness ratio $\eta_{K}$, which can quantify how much of the graph signal lies on ${\rm span}(\V_{K})$. As the signal is more low-pass and $\eta_{K}$ gets close to $0$, its energy becomes more concentrated in ${\rm span}(\V_{K})$, and the new error term decreases.

To conclude, the results presented in this section have an important implication for graph learning from partially observed graph signals: if the signals are sufficiently smooth, the graph underlying the observed nodes can still be reliably estimated from the partial signals $\matr{y}_{o, 1}, ..., \matr{y}_{o, m}$ via the agnostic formulation \eqref{eq:gl_p}, and the estimation $\matr{L}^{\star}_{p}$ would be close to the corresponding part of the original solution $\matr{L}^{\star}$ of \eqref{eq:gl}. Therefore, the graph learning criterion \eqref{eq:gl} is implicitly robust to partial observation to a certain extent based on the smoothness level of graph signals.

\section{Numerical Results}\vspace{-.2cm}
This section presents three sets of numerical experiments on both synthetic and real data to validate our theoretical findings.

\begin{figure}[t!]
\centering
  {\sf \resizebox{1.\linewidth}{!}{
\begin{tikzpicture}

\definecolor{darkcyan32144140}{RGB}{32,144,140}
\definecolor{darkgray176}{RGB}{176,176,176}
\definecolor{darkslateblue48103141}{RGB}{48,103,141}
\definecolor{darkslateblue6857130}{RGB}{68,57,130}
\definecolor{gold25323136}{RGB}{253,231,36}
\definecolor{lightgray204}{RGB}{204,204,204}
\definecolor{mediumseagreen53183120}{RGB}{53,183,120}
\definecolor{yellowgreen14421467}{RGB}{144,214,67}

\begin{groupplot}[group style={group size=2 by 1}]
\nextgroupplot[
legend cell align={left},
legend style={
  fill opacity=0.8,
  draw opacity=1,
  text opacity=1,
  at={(0.97,0.03)},
  anchor=south east,
  draw=lightgray204
},
tick align=outside,
tick pos=left,
title={\Large Median F1 Scores},
x grid style={darkgray176},
xlabel={\Large Number of Observations $n$},
xmajorgrids,
xmin=8, xmax=52,
xtick style={color=black},
ticklabel style={font=\large},
y grid style={darkgray176},
ylabel={},
ymajorgrids,
ymin=0.18135593220339, ymax=1.03898305084746,
ytick style={color=black}
]
\addplot [semithick, darkslateblue6857130, mark=*, mark size=3, mark options={solid}]
table {%
10 0.233926128590971
15 0.220338983050847
20 0.281152955790187
25 0.395535381171496
30 0.622306199314921
35 0.8070172066837
40 0.919037199124726
45 0.976190476190476
50 1
};
\addlegendentry{$\alpha=0.5$}
\addplot [semithick, darkslateblue48103141, mark=*, mark size=3, mark options={solid}]
table {%
10 0.269230769230769
15 0.39078097475044
20 0.535640035640036
25 0.765266384953186
30 0.929889298892989
35 0.970588235294118
40 1
45 1
50 1
};
\addlegendentry{$\alpha=1.0$}
\addplot [semithick, darkcyan32144140, mark=*, mark size=3, mark options={solid}]
table {%
10 0.378246753246753
15 0.455882352941176
20 0.780093165141397
25 0.916967509025271
30 1
35 1
40 1
45 1
50 1
};
\addlegendentry{$\alpha=1.5$}
\addplot [semithick, mediumseagreen53183120, mark=*, mark size=3, mark options={solid}]
table {%
10 0.434664246823956
15 0.551724137931034
20 0.892128279883382
25 0.958333333333333
30 1
35 1
40 1
45 1
50 1
};
\addlegendentry{$\alpha=2.0$}
\addplot [semithick, yellowgreen14421467, mark=*, mark size=3, mark options={solid}]
table {%
10 0.53648863035431
15 0.653846153846154
20 0.947368421052632
25 0.958333333333333
30 1
35 1
40 1
45 1
50 1
};
\addlegendentry{$\alpha=2.5$}
\addplot [semithick, gold25323136, mark=*, mark size=3, mark options={solid}]
table {%
10 0.524590163934426
15 0.711656441717791
20 1
25 1
30 1
35 1
40 1
45 1
50 1
};
\addlegendentry{$\alpha=3.0$}
\nextgroupplot[
legend cell align={left},
legend style={fill opacity=0.8, draw opacity=1, text opacity=1, draw=lightgray204},
tick align=outside,
tick pos=left,
title={\Large Median Ratio $J_{p}(\Tilde{\matr{L}}_{p}) / J_{p}(\matr{L}_{p}^{\star})$},
x grid style={darkgray176},
xlabel={\Large Number of Observations $n$},
xmajorgrids,
xmin=8, xmax=52,
xtick style={color=black},
y grid style={darkgray176},
ylabel={},
ymajorgrids,
ymin=0.7,ymax=100,
ytick style={color=black},
ymode=log,
ticklabel style={font=\large},
]
\addplot [semithick, darkslateblue6857130, mark=*, mark size=3, mark options={solid}]
table {%
10 32.79470676628
15 30.9065576059772
20 21.2479185312702
25 12.4062000286052
30 5.21363685651909
35 2.91632894858963
40 1.88998218508278
45 1.29464047093277
50 1
};

\addplot [semithick, darkslateblue48103141, mark=*, mark size=3, mark options={solid}]
table {%
10 52.400715769002
15 27.2418860890177
20 11.6706444740729
25 5.36784916686556
30 3.33274869176373
35 1.86841918500838
40 1
45 1
50 1
};

\addplot [semithick, darkcyan32144140, mark=*, mark size=3, mark options={solid}]
table {%
10 73.131906745378
15 33.1858392677387
20 5.97901280966285
25 3.30082724322275
30 1
35 1
40 1
45 1
50 1
};

\addplot [semithick, mediumseagreen53183120, mark=*, mark size=3, mark options={solid}]
table {%
10 67.8826440683716
15 27.4147374923442
20 6.10244546314748
25 2.8285681623844
30 1
35 1
40 1
45 1
50 1
};

\addplot [semithick, yellowgreen14421467, mark=*, mark size=3, mark options={solid}]
table {%
10 78.0629977847221
15 19.6890253920835
20 3.26257477319654
25 1.25806153509121
30 1
35 1
40 1
45 1
50 1
};

\addplot [semithick, gold25323136, mark=*, mark size=3, mark options={solid}]
table {%
10 72.9519665572188
15 22.3265969519041
20 1
25 1
30 1
35 1
40 1
45 1
50 1
};

\end{groupplot}

\end{tikzpicture}}}\vspace{-.1cm}
  \caption{Performance of GL-SigRep under partial observations: [Left] F1 score between $\widetilde{\L}_{p}$ and $\L^{\star}_{p}$; [Right] Ratio $J_{p}(\widetilde{\matr{L}}_{p}) / J_{p}(\matr{L}_{p}^{\star})$.}\vspace{-.5cm}
  \label{fig:exp1}
\end{figure}

\vspace{.1cm}\noindent
\textbf{Robustness Against $n$.} The first set of experiment considers synthetically generated graph signals data. We focus on the performance of GL-SigRep \cite{dong2016smooth} in recovering the underlying binary graph from partially observed smooth signals.
Our setup consists of an unweighted graph $\mathcal{G}$ of $N = 50$ nodes independently sampled from an Erdos-Renyi (ER) model with edge probability of $p = 0.2$. The graph signal $\y_m$ is generated by the heat diffusion process \cite{thanou2017diffusion} as $\y_m = {\rm exp}(-\alpha \matr{L})\matr{x}_{m}$, in which the matrix exponential acts as a low-pass filter. The parameter $\alpha > 0$ controls the low-pass-ness of the graph filter; the higher $\alpha$ is, the more smooth the resulting signals are. The excitation signal $\matr{x}_{m}$ is sampled from a standard Gaussian distribution. In this experiment, we generate $M = 200$ partial signals $\matr{y}_{o, 1}, ..., \matr{y}_{o, M}$ for different number of observations $n \in \{10, 15, ..., 50\}$, and solve \eqref{eq:gl_p} with $\lambda = 2$.

Fig.~\ref{fig:exp1} reports the results from $100$ random trials, where we show the median F1 scores of $\L_p^\star$ compared with the observed part of $\L^\star$ on the [Left] plot, and the ratio $\frac{J_{p}(\widetilde{\matr{L}}_{p})}{J_{p}(\matr{L}^{\star}_{p})}$ on the [Right] plot. Even when $n \ll N$, the graph learned from \eqref{eq:gl_p} remains accurate compared to the benchmark with \eqref{eq:gl} and the ratio $\frac{J_{p}(\widetilde{\matr{L}}_p)}{J_{p}(\matr{L}^{\star}_p)}$ remains close to $1$, thus validating Theorem \ref{thm:main}. We can also see that the more low-pass the graph signals are, a good F1 score is achieved with a smaller number of observations $n$. In addition, while we do not have a fine-grained control over the magnitude of non-low-frequency component $\matr{y}_{m}^{\perp}$ in \eqref{eq:nonideal_model}, as the signal becomes more low-pass, $\matr{y}_{m}^{\perp}$ becomes smaller and GL-SigRep performance is less affected by these residual signal components. The simulation results show that Theorem \ref{thm:main} can capture the behavior of the optimal solutions of GL-SigRep, with respect to the number of hidden nodes and the smoothness of graph signals.

\vspace{.1cm}\noindent
\textbf{Comparison with \cite{buciulea2022learning} on Synthetic Data.} The second experiment demonstrates that under suitable conditions, the hidden-node agnostic algorithm GL-SigRep remains competitive compared to state-of-the-art algorithms tailor made for the partial observation scenarios. We consider as benchmark the GSm methods in \cite{buciulea2022learning} which enhances GL-SigRep by modeling the partial observation with low-rankness and sparsity.
Note in our implementation, we ignored the log-barrier $-\log({\rm Diag}(\hat{\matr{L}}_{p}))$ regularizer, and included the constraint ${\rm Tr}(\hat{\matr{L}}_{p}) = n$ as in GL-SigRep. This configuration seems to provide a more stable performance for GSm. We compare all the three variants of GSm in \cite{buciulea2022learning}: GSm-LR models the low-rank-ness of the observed-hidden part by a nuclear norm regularizer; GSm-GL models the column sparsity of the observed-hidden part with a group Lasso penalty.

This experiment uses an unweighted graph of $N = 20$ nodes, sampled from an $k$-nearest neighbors graph model with $k=5$, constructed by sampling nodes from a two-dimensional uniform distribution $U(0, 1)$, and a stochastic block model of two equal-sized communities, with inter-cluster probability of $p_{\rm out} = 0.05$ and intra-cluster probability of $p_{\rm in} = 0.6$. $M = 200$ signals are generated by a low-pass filter as
$\matr{y}_{m} = (\matr{I}+2.5\matr{L})^{-1}\matr{x}_{m}$, where excitation signal $\matr{x}_{m}$ is standard Gaussian.
Fig.~\ref{fig:exp2} shows the results from $100$ random trials. We observe that the performance of GL-SigRep as well as GSm-LR/GSm-GL are similar when $n$ is sufficiently large. Our result suggests that when the graph signals are sufficiently low-pass, GL-SigRep is implicitly robust. In this case, extra modeling of hidden nodes' influence based on low-rankness or sparsity might not bring significant improvements.

\vspace{.1cm}\noindent
{\bf Experiments on Real Data.} The last set of experiments examines the performance of GL-SigRep on real data with partial observations.  In the first experiment, we use a Twitter interaction graph between $N = 30$ members of the 117th United States Congress \cite{fink2023centrality}, where an edge indicates that one person follows the other on Twitter. We generate $M = 100$ low-pass graph signals $\y_{m} = (\I + \L)^{-1}\x_{m}$ on this graph. In the second experiment, we use the average monthly temperature at $N = 40$ stations in Switzerland during 1981-2010 \cite{meteoswiss2025} as our graph signals $\y_1, ..., \y_{12} \in \bbR^{40}$. Since there is no ground-truth graph, we follow \cite{dong2016smooth, buciulea2022learning} and construct an undirected network of stations in which a pair of nodes are connected if their altitude difference is less than $300$ meters. For evaluation, we calculate the F1 score of the observed subgraph estimated by GL-SigRep under full observation vs. the constructed ground-truth, and similarly for GL-SigRep under partial observation. The results are reported in Figure \ref{fig:exp3}, where we observe that GL-SigRep is implicitly robust against partial observation in these two datasets: its performance in recovering ground-truth observed subgraph using partial signals and using full signals are quite close. This confirms the applicability of our developed theory in practical scenarios.

\begin{figure}[t!]
\centering
  {\sf \resizebox{1.\linewidth}{!}{
\begin{tikzpicture}

\definecolor{darkgray176}{RGB}{176,176,176}
\definecolor{goldenrod1911910}{RGB}{191,191,0}
\definecolor{green01270}{RGB}{0,127,0}
\definecolor{lightgray204}{RGB}{204,204,204}

\begin{groupplot}[group style={group size=2 by 1}]

\nextgroupplot[
legend cell align={left},
legend style={
  fill opacity=0.8,
  draw opacity=1,
  text opacity=1,
  at={(0.91,0.5)},
  anchor=east,
  draw=lightgray204
},
tick align=outside,
tick pos=left,
x grid style={darkgray176},
xlabel={\Large Number of Observations $n$},
xmajorgrids,
xmin=7.4, xmax=20.6,
xtick style={color=black},
y grid style={darkgray176},
ylabel={\Large Median F1 Scores},
ymajorgrids,
ymin=0.427211394302849, ymax=0.890629685157421,
ytick style={color=black},
ticklabel style={font=\large},
]
\addplot [semithick, red, mark=*, mark size=3, mark options={solid}]
table {%
8 0.869565217391304
10 0.846153846153846
12 0.792452830188679
14 0.821917808219178
16 0.808510638297872
18 0.819672131147541
20 0.818181818181818
};
\addlegendentry{GL-SigRep}
\addplot [semithick, blue, mark=+, mark size=3, mark options={solid}]
table {%
8 0.526315789473684
10 0.448275862068966
12 0.5
14 0.495867768595041
16 0.481012658227848
18 0.492610837438424
20 0.49802371541502
};
\addlegendentry{GSm}
\addplot [semithick, green01270, mark=asterisk, mark size=3, mark options={solid}]
table {%
8 0.869565217391304
10 0.764705882352941
12 0.792452830188679
14 0.821917808219178
16 0.808510638297872
18 0.819672131147541
20 0.818181818181818
};
\addlegendentry{GSm-LR}
\addplot [semithick, goldenrod1911910, mark=triangle*, mark size=3, mark options={solid}]
table {%
8 0.75
10 0.7
12 0.745098039215686
14 0.836363636363636
16 0.829268292682927
18 0.841121495327103
20 0.818181818181818
};
\addlegendentry{GSm-GL}

\nextgroupplot[
legend cell align={left},
legend style={
  fill opacity=0.8,
  draw opacity=1,
  text opacity=1,
  at={(0.91,0.5)},
  anchor=east,
  draw=lightgray204
},
tick align=outside,
tick pos=left,
x grid style={darkgray176},
xlabel={\Large Number of Observations $n$},
xmajorgrids,
xmin=7.4, xmax=20.6,
xtick style={color=black},
y grid style={darkgray176},
ymajorgrids,
ymin=0.426666666666667, ymax=0.817777777777778,
ytick style={color=black},
ticklabel style={font=\large},
]
\addplot [semithick, red, mark=*, mark size=3, mark options={solid}]
table {%
8 0.8
10 0.777777777777778
12 0.757358490566038
14 0.738461538461539
16 0.746760701984583
18 0.731707317073171
20 0.725429806012687
};
\addplot [semithick, blue, mark=+, mark size=3, mark options={solid}]
table {%
8 0.444444444444444
10 0.474576271186441
12 0.465116279069767
14 0.470588235294118
16 0.481012658227848
18 0.477611940298507
20 0.48
};
\addplot [semithick, green01270, mark=asterisk, mark size=3, mark options={solid}]
table {%
8 0.780193236714976
10 0.757352941176471
12 0.752358490566038
14 0.728501228501228
16 0.729166666666667
18 0.724137931034483
20 0.718097813838179
};
\addplot [semithick, goldenrod1911910, mark=triangle*, mark size=3, mark options={solid}]
table {%
8 0.705882352941177
10 0.692307692307692
12 0.688577586206897
14 0.706787330316742
16 0.722222222222222
18 0.733160036166365
20 0.728747864932335
};
\end{groupplot}

\end{tikzpicture}}}\vspace{-.2cm}
  \caption{Comparing GL-SigRep vs.~GSm/GSm-LR/GSm-GL: [Left] $k$-nearest neighbors graph and [Right] stochastic block model.}
  \label{fig:exp2}
\end{figure}
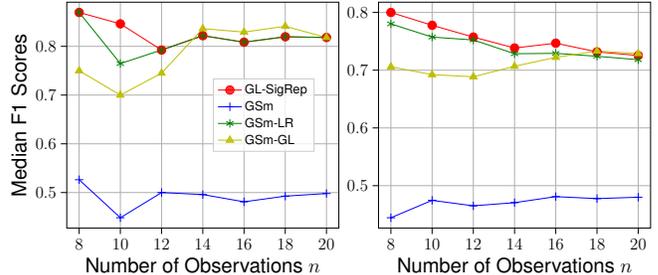

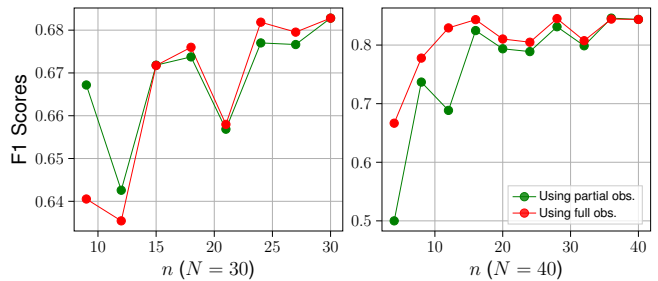
\begin{figure}[t!]
\centering
  {\sf \resizebox{1.\linewidth}{!}{
\begin{tikzpicture}

\definecolor{darkgray176}{RGB}{176,176,176}
\definecolor{green01270}{RGB}{0,127,0}
\definecolor{lightgray204}{RGB}{204,204,204}

\begin{groupplot}[group style={group size=2 by 1}]

\definecolor{darkgray176}{RGB}{176,176,176}
\definecolor{green01270}{RGB}{0,127,0}
\definecolor{lightgray204}{RGB}{204,204,204}

\nextgroupplot[
legend cell align={left},
legend style={
  fill opacity=0.8,
  draw opacity=1,
  text opacity=1,
  at={(0.03,0.97)},
  anchor=north west,
  draw=lightgray204
},
tick align=outside,
tick pos=left,
x grid style={darkgray176},
xlabel={\Large $n$ ($N = 30$)},
xmajorgrids,
xmin=7.95, xmax=31.05,
xtick style={color=black},
y grid style={darkgray176},
ylabel={\Large F1 Scores},
ymajorgrids,
ymin=0.633063474003196, ymax=0.685177537788156,
ytick style={color=black},
ticklabel style={font=\large},
]
\addplot [semithick, green01270, mark=*, mark size=3, mark options={solid}]
table {%
9 0.667186010056461
12 0.642589084137163
15 0.671819058365031
18 0.673749210712691
21 0.656850743210689
24 0.677013652708164
27 0.676638285271714
30 0.682808716707022
};
\addplot [semithick, red, mark=*, mark size=3, mark options={solid}]
table {%
9 0.64054009427515
12 0.635432295084331
15 0.671746673490155
18 0.67599407853133
21 0.65797997367173
24 0.681861479778213
27 0.679536350596325
30 0.682808716707022
};

\nextgroupplot[
legend cell align={left},
legend style={
  fill opacity=0.8,
  draw opacity=1,
  text opacity=1,
  at={(0.97,0.03)},
  anchor=south east,
  draw=lightgray204
},
tick align=outside,
tick pos=left,
x grid style={darkgray176},
xlabel={\Large $n$ ($N = 40$)},
xmajorgrids,
xmin=2.2, xmax=41.8,
xtick style={color=black},
y grid style={darkgray176},
ylabel={},
ymajorgrids,
ymin=0.482701149425287, ymax=0.863275862068966,
ytick style={color=black},
ticklabel style={font=\large},
]
\addplot [semithick, green01270, mark=*, mark size=3, mark options={solid}]
table {%
4 0.5
8 0.736842105263158
12 0.688524590163934
16 0.824742268041237
20 0.793650793650794
24 0.788888888888889
28 0.831275720164609
32 0.798780487804878
36 0.845977011494253
40 0.84375
};
\addlegendentry{Using partial obs.}
\addplot [semithick, red, mark=*, mark size=3, mark options={solid}]
table {%
4 0.666666666666667
8 0.777777777777778
12 0.829268292682927
16 0.843373493975904
20 0.810344827586207
24 0.804878048780488
28 0.845188284518828
32 0.807453416149068
36 0.844339622641509
40 0.84375
};
\addlegendentry{Using full obs.}
\end{groupplot}

\end{tikzpicture}}}\vspace{-.2cm}
  \caption{Performance of GL-SigRep under partial observation vs. full observation: [Left] Twitter interaction network of U.S. Congress members and [Right] meteorological network in Switzerland.}\vspace{-.3cm}
  \label{fig:exp3}
\end{figure}

\vspace{-.1cm}
\section{Conclusion}\vspace{-.1cm}
This paper shows that GL-SigRep from \cite{dong2016smooth} is implicitly robust to partial observations when the underlying graph signals are sufficiently low pass. We demonstrate both theoretically and empirically that the method remains competitive compared to state-of-the-art graph learning methods tailor made for partial observations. We anticipate that our results can be extended to analyze other paradigms involving graph learning with partial observations.

\clearpage

\bibliographystyle{IEEEtran}
\bibliography{strings}

\begin{thebibliography}{10}
\providecommand{\url}[1]{#1}
\csname url@samestyle\endcsname
\providecommand{\newblock}{\relax}
\providecommand{\bibinfo}[2]{#2}
\providecommand{\BIBentrySTDinterwordspacing}{\spaceskip=0pt\relax}
\providecommand{\BIBentryALTinterwordstretchfactor}{4}
\providecommand{\BIBentryALTinterwordspacing}{\spaceskip=\fontdimen2\font plus
\BIBentryALTinterwordstretchfactor\fontdimen3\font minus \fontdimen4\font\relax}
\providecommand{\BIBforeignlanguage}[2]{{%
\expandafter\ifx\csname l@#1\endcsname\relax
\typeout{** WARNING: IEEEtran.bst: No hyphenation pattern has been}%
\typeout{** loaded for the language `#1'. Using the pattern for}%
\typeout{** the default language instead.}%
\else
\language=\csname l@#1\endcsname
\fi
#2}}
\providecommand{\BIBdecl}{\relax}
\BIBdecl

\bibitem{dong2016smooth}
X.~Dong, D.~Thanou, P.~Frossard, and P.~Vandergheynst, ``Learning laplacian matrix in smooth graph signal representations,'' \emph{IEEE Transactions on Signal Processing}, vol.~64, no.~23, pp. 6160--6173, 2016.

\bibitem{kalofolias2016learn}
V.~Kalofolias, ``How to learn a graph from smooth signals,'' in \emph{Artificial intelligence and statistics}.\hskip 1em plus 0.5em minus 0.4em\relax PMLR, 2016, pp. 920--929.

\bibitem{dong2019learning}
X.~Dong, D.~Thanou, M.~Rabbat, and P.~Frossard, ``Learning graphs from data: A signal representation perspective,'' \emph{IEEE Signal Processing Magazine}, vol.~36, no.~3, pp. 44--63, 2019.

\bibitem{mateos2019connecting}
G.~Mateos, S.~Segarra, A.~G. Marques, and A.~Ribeiro, ``Connecting the dots: Identifying network structure via graph signal processing,'' \emph{IEEE Signal Processing Magazine}, vol.~36, no.~3, pp. 16--43, 2019.

\bibitem{matta2020partialobs}
V.~Matta, A.~Santos, and A.~H. Sayed, ``Graph learning under partial observability,'' \emph{Proceedings of the IEEE}, vol. 108, no.~11, pp. 2049--2066, 2020.

\bibitem{chandrasekaran2012latent}
V.~Chandrasekaran, P.~A. Parrilo, and A.~S. Willsky, ``Latent variable graphical model selection via convex optimization,'' \emph{The Annals of Statistics}, pp. 1935--1967, 2012.

\bibitem{anandkumar2013learning}
A.~Anandkumar and R.~Valluvan, ``Learning loopy graphical models with latent variables: Efficient methods and guarantees,'' \emph{The Annals of Statistics}, pp. 401--435, 2013.

\bibitem{buciulea2022learning}
A.~Buciulea, S.~Rey, and A.~G. Marques, ``Learning graphs from smooth and graph-stationary signals with hidden variables,'' \emph{IEEE Transactions on Signal and Information Processing over Networks}, vol.~8, pp. 273--287, 2022.

\bibitem{peng2024network}
C.~Peng, H.~Tang, Z.~Wang, and X.~Shen, ``Network topology inference from smooth signals under partial observability,'' \emph{arXiv preprint arXiv:2410.05707}, 2024.

\bibitem{navarro2024joint}
M.~Navarro, S.~Rey, A.~Buciulea, A.~G. Marques, and S.~Segarra, ``Joint network topology inference in the presence of hidden nodes,'' \emph{IEEE Transactions on Signal Processing}, vol.~72, pp. 2710--2725, 2024.

\bibitem{berger2020efficient}
P.~Berger, G.~Hannak, and G.~Matz, ``Efficient graph learning from noisy and incomplete data,'' \emph{IEEE Transactions on Signal and Information Processing over Networks}, vol.~6, pp. 105--119, 2020.

\bibitem{santos2024learning}
A.~Santos, D.~Rente, R.~Seabra, and J.~M. Moura, ``Learning the causal structure of networked dynamical systems under latent nodes and structured noise,'' in \emph{Proceedings of the AAAI Conference on Artificial Intelligence}, vol.~38, no.~13, 2024, pp. 14\,866--14\,874.

\bibitem{ramakrishna2020userguide}
R.~Ramakrishna, H.-T. Wai, and A.~Scaglione, ``A user guide to low-pass graph signal processing and its applications: Tools and applications,'' \emph{IEEE Signal Processing Magazine}, vol.~37, no.~6, pp. 74--85, 2020.

\bibitem{wai2019blind}
H.-T. Wai, S.~Segarra, A.~E. Ozdaglar, A.~Scaglione, and A.~Jadbabaie, ``Blind community detection from low-rank excitations of a graph filter,'' \emph{IEEE Transactions on signal processing}, vol.~68, pp. 436--451, 2019.

\bibitem{wai2022partial}
H.-T. Wai, Y.~C. Eldar, A.~E. Ozdaglar, and A.~Scaglione, ``Community inference from partially observed graph signals: Algorithms and analysis,'' \emph{IEEE Transactions on Signal Processing}, vol.~70, pp. 2136--2151, 2022.

\bibitem{he2022central}
Y.~He and H.-T. Wai, ``Detecting central nodes from low-rank excited graph signals via structured factor analysis,'' \emph{IEEE Transactions on Signal Processing}, vol.~70, pp. 2416--2430, 2022.

\bibitem{he2023central}
------, ``Central nodes detection from partially observed graph signals,'' in \emph{2023 IEEE International Conference on Acoustics, Speech and Signal Processing (ICASSP)}, 2023, pp. 1--5.

\bibitem{shuman2013emerging}
D.~I. Shuman, S.~K. Narang, P.~Frossard, A.~Ortega, and P.~Vandergheynst, ``The emerging field of signal processing on graphs: Extending high-dimensional data analysis to networks and other irregular domains,'' \emph{IEEE Signal Processing Magazine}, vol.~30, no.~3, pp. 83--98, 2013.

\bibitem{foucart2013cs}
S.~Foucart and H.~Rauhut, \emph{A Mathematical Introduction to Compressive Sensing}.\hskip 1em plus 0.5em minus 0.4em\relax Springer New York, 2013.

\bibitem{woodruff2014sketch}
D.~P. Woodruff, ``Sketching as a tool for numerical linear algebra,'' \emph{Foundations and Trends® in Theoretical Computer Science}, vol.~10, no. 1–2, pp. 1--157, 2014.

\bibitem{puy2018sampling}
G.~Puy, N.~Tremblay, R.~Gribonval, and P.~Vandergheynst, ``Random sampling of bandlimited signals on graphs,'' \emph{Applied and Computational Harmonic Analysis}, vol.~44, no.~2, pp. 446--475, 2018.

\bibitem{thanou2017diffusion}
D.~Thanou, X.~Dong, D.~Kressner, and P.~Frossard, ``Learning heat diffusion graphs,'' \emph{IEEE Transactions on Signal and Information Processing over Networks}, vol.~3, no.~3, pp. 484--499, 2017.

\bibitem{fink2023centrality}
C.~G. Fink, K.~Fullin, G.~Gutierrez, N.~Omodt, S.~Zinnecker, G.~Sprint, and S.~McCulloch, ``A centrality measure for quantifying spread on weighted, directed networks,'' \emph{Physica A}, 2023.

\bibitem{meteoswiss2025}
\BIBentryALTinterwordspacing
{MeteoSwiss}, ``Climate diagrams and normals per station,'' Federal Office of Meteorology and Climatology (MeteoSwiss), 2025. [Online]. Available: \url{https://www.meteoswiss.admin.ch/climate/the-climate-of-switzerland/climate-normals/climate-diagrams-and-normals-per-station.html}
\BIBentrySTDinterwordspacing

\bibitem{tropp2011hadamard}
J.~A. Tropp, ``Improved analysis of the subsampled randomized hadamard transform,'' \emph{Advances in Adaptive Data Analysis}, vol.~03, no. 01n02, p. 115–126, 2011.

\end{thebibliography}

\clearpage

\section*{\texorpdfstring{Appendix A: Proof of Theorem \ref{thm:main}}{Appendix A: Proof of Theorem 1}}
\begin{Lemma} \label{lemma:rip}
Consider a random partial observation
\[
\mathcal{V}_{o} = \{s(1), ..., s(n)\},
\]
which is sampled uniformly without replacement from the node indices $\{1, ..., N\}$. The set $\mathcal{V}_{o}$ is encoded in a matrix $\E_{o} \in \{0, 1\}^{n \times N}$, and the partial graph signal is given by $\y_{o} = \E_{o}\y \in \mathbb{R}^{n}$. For any $\delta \in (0, 1)$, there exists $t \in (0, 1)$ such that with probability at least $1 - \delta$,
\begin{align*}
    \y_{o}^{\top}(\matr{E}_{o}\matr{L}\matr{E}_{o}^{\top})\y_{o} \leq (1+t)\frac{n}{N}\frac{\sigma_{\max}(\L)}{\sigma_{\min}^{+}(\L)}\y^{\top}\L\y,~\forall \y \in \text{\rm span}(\V_{K}),
\end{align*}
provided that
\begin{align*}
    \frac{n}{N} \geq \frac{3}{t^2}\max_{1 \leq i \leq N}\|\V_{K}^{\top}\e_{i}\|_{2}^{2}\ln\left(\frac{K}{\delta}\right).
\end{align*}
\end{Lemma}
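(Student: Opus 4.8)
\noindent The plan is to reduce the claim to a subspace restricted-isometry statement for the random sampling operator $\E_o$ restricted to $\mathrm{span}(\V_K)$, establish that statement via a matrix Chernoff bound for sampling without replacement, and then close the gap with two elementary deterministic spectral inequalities. First, since $\V_K$ has orthonormal columns, every $\y \in \mathrm{span}(\V_K)$ can be written $\y = \V_K \bm{\alpha}$ with $\|\y\|_2 = \|\bm{\alpha}\|_2$, and $\|\y_o\|_2^2 = \y^\top \E_o^\top \E_o \y = \bm{\alpha}^\top \bM \bm{\alpha}$, where $\bM := \V_K^\top \E_o^\top \E_o \V_K = \sum_{a=1}^n (\V_K^\top \e_{s(a)})(\V_K^\top \e_{s(a)})^\top$ is a sum of $n$ rank-one PSD matrices indexed by the sampled nodes. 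Because the sample is drawn uniformly without replacement, $\bbE[\bM] = \tfrac nN \V_K^\top \V_K = \tfrac nN \I_K$, and each summand has spectral norm $\|\V_K^\top \e_{s(a)}\|_2^2 \le \mu := \max_{1 \le i \le N}\|\V_K^\top \e_i\|_2^2$.

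The probabilistic core is to apply the upper-tail matrix Chernoff inequality to $\tfrac Nn \bM$, whose expectation is $\I_K$ and whose summands have spectral norm at most $\tfrac Nn \mu$. Invoking the sampling-without-replacement version of that inequality (or reducing to the i.i.d.\ case via Hoeffding's convex-domination argument) gives $\bbP\{\lambda_{\max}(\tfrac Nn \bM) \ge 1+t\} \le K \exp\big(-((1+t)\ln(1+t)-t)\,/\,(\tfrac Nn\mu)\big) \le K\exp\big(-\tfrac{t^2 n}{3\mu N}\big)$, using $(1+t)\ln(1+t)-t \ge t^2/3$ on $(0,1)$. Forcing the right-hand side to be at most $\delta$ is exactly the stated hypothesis $\tfrac nN \ge \tfrac{3}{t^2}\mu\ln(K/\delta)$; hence, with probability at least $1-\delta$, $\bM \preceq (1+t)\tfrac nN \I_K$, i.e.\ $\|\y_o\|_2^2 \le (1+t)\tfrac nN \|\y\|_2^2$ for all $\y \in \mathrm{span}(\V_K)$ simultaneously.

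Finally, two deterministic steps convert this norm comparison into the quadratic-form bound. Since $\E_o$ has orthonormal rows, $\|\E_o\L\E_o^\top\|_2 \le \|\L\|_2 = \sigma_{\max}(\L)$, so $\y_o^\top(\E_o\L\E_o^\top)\y_o \le \sigma_{\max}(\L)\|\y_o\|_2^2$; and for $\y \in \mathrm{span}(\V_K)$ lying in the row space of $\L$ (that is, orthogonal to $\v_1 = \one/\sqrt N$) one has $\y^\top\L\y \ge \sigma_{\min}^+(\L)\|\y\|_2^2$, the general case differing only by a $\v_1$-component term that is absorbed into the $\mathcal{O}(\epsilon)$-type residuals controlled by Assumption A\ref{as:1} in Theorem \ref{thm:main}. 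Chaining $\y_o^\top(\E_o\L\E_o^\top)\y_o \le \sigma_{\max}(\L)\|\y_o\|_2^2 \le (1+t)\tfrac nN\sigma_{\max}(\L)\|\y\|_2^2 \le (1+t)\tfrac nN\tfrac{\sigma_{\max}(\L)}{\sigma_{\min}^+(\L)}\y^\top\L\y$ yields the lemma.

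The step I expect to be the main obstacle is the probabilistic one: pinning down the matrix Chernoff constants for sampling \emph{without} replacement so that the tail exponent reproduces exactly the factor $3/t^2$ and the $\ln(K/\delta)$ of \eqref{eq:nNcond}. The reparametrization in terms of $\bm{\alpha}$ and the two spectral inequalities are routine by comparison.
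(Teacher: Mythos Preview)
Your proposal follows essentially the same route as the paper: apply the matrix Chernoff bound for sampling without replacement (Tropp, 2011) to $\tfrac{N}{n}\V_K^\top\E_o^\top\E_o\V_K$ to get $\|\E_o\y\|_2^2 \le (1+t)\tfrac{n}{N}\|\y\|_2^2$ on $\mathrm{span}(\V_K)$, then sandwich with $\y_o^\top(\E_o\L\E_o^\top)\y_o \le \|\L\|_2\|\y_o\|_2^2$ on one side and $\|\y\|_2^2 \le \y^\top\L\y/\sigma_{\min}^+(\L)$ on the other. Your caveat about the $\v_1$-component is well taken and not a defect of your argument relative to the paper's: the paper's own last step writes $\|\y\|_2^2 \le \|(\L^{1/2})^\dagger\|_2^2\|\L^{1/2}\y\|_2^2$, which likewise only holds for $\y \perp \ker(\L)$, so this is a shared subtlety that both proofs leave implicit rather than a gap you have introduced.
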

\begin{proof}
To begin, we notice that
\begin{align*}
    \y_{o}^{\top}(\matr{E}_{o}\L\matr{E}_{o}^{\top})\y_{o}
    &= \y^{\top}\E_{o}^{\top}\matr{E}_{o}\L\matr{E}_{o}^{\top}\E_{o}\y\\
    &\leq \|\L\|_{2}(\y^{\top}\E_{o}^{\top}\matr{E}_{o}\matr{E}_{o}^{\top}\E_{o}\y)\\
    &= \|\L\|_{2}(\y^{\top}\E_{o}^{\top}\E_{o}\y).
\end{align*}
Let $\Z_{\ell} := \frac{N}{n}\V_{K}^{\top}\e_{s(\ell)}\e_{s(\ell)}^{\top}\V_{K}$, which is sampled from the set $\mathcal{Z} = \{\frac{N}{n}\matr{V}_{K}^{\top}\matr{e}_i\matr{e}_{i}^{\top}\matr{V}_{K}\}_{i=1}^{N}$, where $\matr{e}_{i} \in \bbR^{N}$ is the $i$-th standard basis vector. A sum of $n$ symmetric matrices $\matr{Z}_{1}, ..., \matr{Z}_{n}$ is
\begin{align*}
    \X := \sum_{\ell = 1}^{n}\Z_{\ell}
    &= \frac{N}{n}\V_{K}^{\top}\left(\sum_{\ell = 1}^{n}\e_{s(\ell)}\e_{s(\ell)}^{\top}\right)\V_{K}\\
    &= \frac{N}{n}\V_{K}^{\top}\E_{o}^{\top}\E_{o}\V_{K}.
\end{align*}
Note the followings:
\begin{align*}
    &\textstyle \bbE[\Z_{1}] = \frac{N}{n}\V_{K}^{\top}\bbE[\e_{s(1)}\e_{s(1)}^{\top}]\V_{K} = \frac{1}{n}\V_{K}^{\top}\V_{K} = \frac{1}{n}\I,\\
    &\textstyle \mu_{\max} := n\lambda_{\max}\left(\bbE[\Z_{1}]\right) = 1,\\
    &\textstyle \max_{\Z \in {\cal Z}}\lambda_{\max}(\Z) = \frac{N}{n}\max_{\ell=1, ..., N}\lambda_{\max}(\V_{K}^{\top}\e_{s(\ell)}\e_{s(\ell)}^{\top}\V_{K})\\
    &= \textstyle \frac{N}{n}\max_{i}\|\V_{K}^{\top}\e_{i}\|_{2}^{2}.
\end{align*}
Then, the matrix Chernoff's bound for the case of uniformly random sampling without replacement \cite[Theorem 2.1]{tropp2011hadamard} states that for any $\delta \in (0, 1)$, with probability at least $1 - \delta$,
\begin{align}
    \|\X\|_{2} \leq 1+t \label{eq:bound},
\end{align}
provided that
\begin{align*}
    \frac{n}{N} \geq \frac{3}{t^2}\max_{1 \leq i \leq N}\|\V_{K}^{\top}\e_{i}\|_{2}^{2}\ln\left(\frac{K}{\delta}\right).
\end{align*}
Observe that \eqref{eq:bound} implies that
\begin{align*}
    \frac{N}{n}\|\E_{o}\V_{K}\boldsymbol{\upsilon}\|_{2}^{2} \leq (1+t)\|\boldsymbol{\upsilon}\|_{2}^{2},~\forall \boldsymbol{\upsilon} \in \mathbb{R}^{K}
\end{align*}
which means for any $\y \in \text{span}(\V_{K})$,
\begin{align*}
    \frac{N}{n}\|\E_{o}\y\|_{2}^{2} \leq (1+t)\|\y\|_{2}^{2},
\end{align*}
Hence, with high probability, $\y_{o}^{\top}\L_{oo}\y_{o}$ can be bounded as
\begin{align*}
    \y_{o}^{\top}(\matr{E}_{o}\L\matr{E}_{o}^{\top})\y_{o}
    &\leq (1+t)\frac{n}{N}\|\L\|_{2}\|\y\|_{2}^{2}\\
    &\leq (1+t)\frac{n}{N}\|\L\|_{2}\|(\L^{1/2})^{\dagger}\|_{2}^{2}\|\L^{1/2}\y\|_{2}^{2}\\
    &=(1+t)\frac{n}{N}\frac{\sigma_{\max}(\L)}{\sigma_{\min}^{+}(\L)}\y^{\top}\L\y.
\end{align*}
\end{proof}

\mainresult*

\begin{proof}
Let $\textstyle C(t) := (1+t)\frac{\sigma_{\max}(\L)}{\sigma_{\min}^{+}(\L)}$. Applying Lemma \ref{lemma:rip} yields:
\begin{align*}
    &J_{f}( \Lfull^\star ) \leq J_{f}( \widehat{\Lfull} )\\
    &= \frac{N}{n} \sum_{m=1}^{M} {\bf y}_{o,m}^\top \Lpart^\star {\bf y}_{o,m}\\
    &= \frac{N}{n} J_{p} ( \Lpart^\star )\\
    &\leq \frac{N}{n} J_{p} ( \widetilde{\L}_{p} )\\
    &= \frac{N}{ {\rm Tr}( \Lfull_{oo}^\star ) + {\bf 1}^\top \Lfull_{oh}^\star {\bf 1} } \left( J_{p}( {\bf E}_o \Lfull^\star {\bf E}_o^\top ) + J_{p}( {\rm Diag}( \Lfull_{oh}^\star {\bf 1}) ) \right) \\
    & \leq \frac{n}{ {\rm Tr}( \Lfull_{oo}^\star ) + {\bf 1}^\top \Lfull_{oh}^\star {\bf 1} }C(t) J_{f}( \Lfull^\star )\\
    &+ \frac{N}{n}\frac{n}{ {\rm Tr}( \Lfull_{oo}^\star ) + {\bf 1}^\top \Lfull_{oh}^\star {\bf 1} }J_{p}( {\rm Diag}( \Lfull_{oh}^\star {\bf 1}) ).
\end{align*}
With $\frac{n}{ {\rm Tr}( \Lfull_{oo}^\star ) + {\bf 1}^\top \Lfull_{oh}^\star {\bf 1} } \leq \frac{1}{c}$ and $-\L^{\star}_{oh}\matr{1} = {\cal O}(\epsilon)$, this gives an approximation bound
\begin{align*}
    J_{f}( {\Lfull}^\star ) \leq J_{f}( \widehat{\Lfull} ) \leq \frac{C(t)}{c}J_{f}( \Lfull^\star ) + {\cal O}\left(\frac{N\epsilon}{nc}\right).
\end{align*}
Conversely, we can also bound that 
\begin{align*}
    &J_{p}( \Lpart^\star )\\
    &\leq J_{p} ( \widetilde{\L}_{p} )\\
    &= \frac{n}{ {\rm Tr}( \Lfull_{oo}^\star ) + {\bf 1}^\top \Lfull_{oh}^\star {\bf 1} } \left( J_{p}( {\bf E}_o \Lfull^\star {\bf E}_o^\top ) + J_{p}( {\rm Diag}( \Lfull_{oh}^\star {\bf 1}) ) \right) \\
    & \leq \frac{n}{ {\rm Tr}( \Lfull_{oo}^\star ) + {\bf 1}^\top \Lfull_{oh}^\star {\bf 1} } \left( C(t) \frac{n}{N} J_{f}( \Lfull^\star ) + J_{p}( {\rm Diag}( \Lfull_{oh}^\star {\bf 1}) ) \right) \\
    & \leq \frac{n}{ {\rm Tr}( \Lfull_{oo}^\star ) + {\bf 1}^\top \Lfull_{oh}^\star {\bf 1} } \left( C(t) \frac{n}{N} J_{f}( \widehat{\Lfull} ) + J_{p}( {\rm Diag}( \Lfull_{oh}^\star {\bf 1}) ) \right) \\
    & = \frac{n}{ {\rm Tr}( \Lfull_{oo}^\star ) + {\bf 1}^\top \Lfull_{oh}^\star {\bf 1} } \left( C(t) J_{p}( \Lpart^\star ) + J_{p}( {\rm Diag}( \Lfull_{oh}^\star {\bf 1}) ) \right).
\end{align*}
Therefore,
\begin{align*}
    J_{p}( \Lpart^\star ) \leq J_{p} ( \widetilde{\L}_{p} ) \leq \frac{C(t)}{c}J_{p}( \Lpart^\star ) + {\cal O}\left(\frac{\epsilon}{c}\right).
\end{align*}
\end{proof}

\section*{\texorpdfstring{Appendix B: Proof of Corollary \ref{cor:nonideal}}{Appendix B: Proof of Corollary 1}}

\begin{Lemma}
\label{lemma:nonideal}
Assume the non-ideal low-pass signal model as in \eqref{eq:nonideal_model}. Then, the following inequality holds:
\begin{align*}
    \textstyle \matr{y}_{o}^{\top}(\matr{E}_{o}\matr{L}\matr{E}_{o}^{\top})\matr{y}_{o}
    &\leq (\matr{y}_{o}^{\parallel})^{\top}(\matr{E}_{o}\matr{L}\matr{E}_{o}^{\top})\matr{y}_{o}^{\parallel}\\
    &+ \textstyle \mathcal{O}(||\matr{L}||_{2}\eta_{K}H^{2}M^{2}).
\end{align*}
\end{Lemma}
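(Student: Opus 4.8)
\textbf{Proof proposal for Lemma \ref{lemma:nonideal}.}

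The plan is to expand the quadratic form $\matr{y}_o^\top (\matr{E}_o \matr{L} \matr{E}_o^\top) \matr{y}_o$ using the orthogonal decomposition $\matr{y} = \matr{y}^\parallel + \matr{y}^\perp$, so that $\matr{y}_o = \matr{E}_o \matr{y}^\parallel + \matr{E}_o \matr{y}^\perp = \matr{y}_o^\parallel + \matr{y}_o^\perp$. Writing $\matr{M}_o := \matr{E}_o \matr{L} \matr{E}_o^\top$, the bilinear expansion gives
\begin{align*}
    \matr{y}_o^\top \matr{M}_o \matr{y}_o = (\matr{y}_o^\parallel)^\top \matr{M}_o \matr{y}_o^\parallel + 2 (\matr{y}_o^\parallel)^\top \matr{M}_o \matr{y}_o^\perp + (\matr{y}_o^\perp)^\top \matr{M}_o \matr{y}_o^\perp.
\end{align*}
The task then reduces to bounding the cross term and the pure-$\perp$ term, and showing both are $\mathcal{O}(\|\matr{L}\|_2 \eta_K H^2 M^2)$. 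First I would control $\|\matr{y}^\perp\|_2$: since $\matr{y} = \mathcal{H}(\matr{L})\matr{x}$ and $\matr{y}^\perp = (\matr{I} - \matr{V}_K \matr{V}_K^\top)\matr{y} = \sum_{i=K+1}^N h(\lambda_i) (\v_i^\top \matr{x}) \v_i$, we get $\|\matr{y}^\perp\|_2 \le \max_{i \ge K+1} |h(\lambda_i)| \cdot \|\matr{x}\|_2 \le \eta_K \big(\min_{j \le K}|h(\lambda_j)|\big) M$. A crude but sufficient bound is $\|\matr{y}^\perp\|_2 \le \eta_K H M$ using A\ref{as:regularity}, i.e.\ $\min_{j\le K}|h(\lambda_j)| \le H$. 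Likewise $\|\matr{y}^\parallel\|_2 \le \|\matr{y}\|_2 \le HM$ and $\|\matr{y}_o^\parallel\|_2 \le \|\matr{y}^\parallel\|_2$, $\|\matr{y}_o^\perp\|_2 \le \|\matr{y}^\perp\|_2$ because $\matr{E}_o$ is a submatrix-selection operator with $\|\matr{E}_o\|_2 = 1$.

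Next I would bound the two offending terms by operator-norm/Cauchy--Schwarz estimates. For the pure-$\perp$ term, $(\matr{y}_o^\perp)^\top \matr{M}_o \matr{y}_o^\perp \le \|\matr{M}_o\|_2 \|\matr{y}_o^\perp\|_2^2 \le \|\matr{L}\|_2 \|\matr{y}^\perp\|_2^2 \le \|\matr{L}\|_2 \eta_K^2 H^2 M^2 \le \|\matr{L}\|_2 \eta_K H^2 M^2$ since $\eta_K < 1$ (using $\|\matr{M}_o\|_2 = \|\matr{E}_o \matr{L}\matr{E}_o^\top\|_2 \le \|\matr{L}\|_2$). For the cross term, Cauchy--Schwarz in the $\matr{M}_o$-seminorm (or just submultiplicativity) gives $2|(\matr{y}_o^\parallel)^\top \matr{M}_o \matr{y}_o^\perp| \le 2\|\matr{M}_o\|_2 \|\matr{y}_o^\parallel\|_2 \|\matr{y}_o^\perp\|_2 \le 2\|\matr{L}\|_2 (HM)(\eta_K H M) = 2\eta_K \|\matr{L}\|_2 H^2 M^2$. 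Adding the two contributions yields exactly an $\mathcal{O}(\|\matr{L}\|_2 \eta_K H^2 M^2)$ residual on top of $(\matr{y}_o^\parallel)^\top \matr{M}_o \matr{y}_o^\parallel$, which is the claim.

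I do not anticipate a genuine obstacle here; the lemma is essentially a perturbation bookkeeping exercise. The one point requiring mild care is the direction of the inequality: we need an \emph{upper} bound on $\matr{y}_o^\top \matr{M}_o \matr{y}_o$ in terms of the parallel part, and since the cross term can in principle be negative, the clean way is to drop it to its absolute value (as above) rather than trying to absorb a sign. A second subtlety is that $\matr{M}_o$ need not be PSD as a standalone $n\times n$ matrix even though $\matr{L}$ is PSD (restricting a Laplacian to a node subset and keeping only $\matr{E}_o\matr{L}\matr{E}_o^\top$ is not the subgraph Laplacian), but this does not matter because every bound I use is via $\|\matr{M}_o\|_2 \le \|\matr{L}\|_2$, which holds regardless of definiteness. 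If one instead wants the sharper $\eta_K^2$ dependence in the pure-$\perp$ term it is available, but since the cross term already contributes at order $\eta_K$ and $\eta_K<1$, collapsing everything to $\mathcal{O}(\|\matr{L}\|_2\eta_K H^2 M^2)$ is both valid and consistent with how the bound is invoked in Corollary \ref{cor:nonideal}.
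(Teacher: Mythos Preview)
Your proposal is correct and follows essentially the same approach as the paper's proof: expand the quadratic form bilinearly, bound $\|\matr{y}^\perp\|_2 \le \eta_K HM$ and $\|\matr{y}^\parallel\|_2 \le HM$ from A\ref{as:regularity} and Definition~\ref{def:lpf}, then control the cross and pure-$\perp$ terms via $\|\matr{E}_o\matr{L}\matr{E}_o^\top\|_2 \le \|\matr{L}\|_2$ and Cauchy--Schwarz. (One minor aside: $\matr{E}_o\matr{L}\matr{E}_o^\top$ is in fact PSD, being a principal submatrix of the PSD Laplacian $\matr{L}$, though as you correctly observe this plays no role in the argument.)
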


\begin{proof}
One can expand the quadratic form $\matr{y}_{o}^{\top}\matr{E}_{o}\matr{L}\matr{E}_{o}^{\top}\matr{y}_{o}$ of the partial signal $\matr{y}_{o} = \matr{E}_{o}\matr{y} \in \bbR^{n}$, then apply Cauchy-Schwarz inequality and norm equivalent to obtain
\begin{align*}
    &\mathbf{y}_{o}^{\top}\matr{E}_{o}\matr{L}\matr{E}_{o}^{\top}\mathbf{y}_{o}\\
    &= (\mathbf{y}_{o}^{\parallel})^{\top}\matr{E}_{o}\matr{L}\matr{E}_{o}^{\top}\mathbf{y}_{o}^{\parallel} + 2(\mathbf{y}^{\parallel})^{\top}\matr{E}_{o}^{\top}\matr{E}_{o}\mathbf{L}\matr{E}_{o}^{\top}\matr{E}_{o}\mathbf{y}^{\perp}\\
    &+ (\mathbf{y}^{\perp})^{\top}\matr{E}_{o}^{\top}\matr{E}_{o}\mathbf{L}\matr{E}_{o}^{\top}\matr{E}_{o}\mathbf{y}^{\perp}\\
    &\leq (\mathbf{y}_{o}^{\parallel})^{\top}\matr{E}_{o}\matr{L}\matr{E}_{o}^{\top}\mathbf{y}_{o}^{\parallel} + 2||\matr{L}||_{2}||\matr{E}_{o}^{\top}\matr{E}_{o}\matr{y}^{\perp}||_{2}||\matr{E}_{o}^{\top}\matr{E}_{o}\matr{y}^{\parallel}||_{2}\\
    &+ ||\matr{L}||_{2}||\matr{E}_{o}^{\top}\matr{E}_{o}\matr{y}^{\perp}||_{2}^{2}\\
    &\leq (\mathbf{y}_{o}^{\parallel})^{\top}\matr{E}_{o}\matr{L}\matr{E}_{o}^{\top}\mathbf{y}_{o}^{\parallel} + 2||\matr{L}||_{2}||\matr{y}^{\perp}||_{2}||\matr{y}^{\parallel}||_{2} + ||\matr{L}||_{2}||\matr{y}^{\perp}||_{2}^{2}.
\end{align*}
We want to bound $||\matr{y}^{\perp}||_{2}$ as well as $||\matr{y}^{\parallel}||_{2}$ based on the properties of low-pass filtering process that gives rise to the graph signal $\matr{y}$. With $\hat{x} = [\hat{x}_1, ..., \hat{x}_{N}] \in \bbR^{N}$ such that $\hat{\x} := \matr{V}^{\top}\matr{x}$, we have that
\begin{align}
    ||\matr{y}^{\parallel}||^{2}_{2}
    &=\textstyle \sum_{i=1}^{K}|h(\lambda_i)|^{2}|\hat{x}_{i}|^{2}\\
    &\leq \textstyle (H_{\max})^{2}\sum_{i=1}^{K}|\hat{x}_{i}|^{2}\\
    &\leq (H_{\max})^{2}||\hat{\matr{x}}||_{2}^{2}\\
    &\leq (H_{\max})^{2}M^{2}.
\end{align}
Let us denote
\begin{align*}
    \textstyle H_{-K}^{\max} := \max_{i=K+1, ..., N} |h(\lambda_i)|,~H_{K}^{\min} := \min_{i=1, ..., K} |h(\lambda_i)|.
\end{align*}
By the low-pass property in Definition \ref{def:lpf}, we similarly have
\begin{align}
    ||\matr{y}^{\perp}||_{2}^{2}
    &\leq \textstyle (H_{-K}^{\max})^{2}\sum_{i=K+1}^{N}|\hat{x}_{i}|^{2}\\
    &\leq \textstyle\eta_{K}^{2}(H_{K}^{\min})^{2}\sum_{i=K+1}^{N}|\hat{x}_{i}|^{2}\\\
    &\leq \eta_{K}^{2}H^{2}||\hat{\x}||_{2}^{2}\\
    &\leq \eta_{K}^{2}H^{2}M^{2}.
\end{align}
Therefore, we can conclude that
\begin{align}
    &\matr{y}_{o}^{\top}\matr{E}_{o}\matr{L}\matr{E}_{o}^{\top}\matr{y}_{o}\\
    &\leq (\matr{y}_{o}^{\parallel})^{\top}\matr{E}_{o}\matr{L}\matr{E}_{o}^{\top}\matr{y}_{o}^{\parallel}\\
    &+ 2||\matr{L}||_{2}\eta_{K}H^{2}M^{2} + ||\matr{L}||_{2}\eta_{K}^{2}H^{2}M^{2}\\
    &= (\matr{y}_{o}^{\parallel})^{\top}\matr{E}_{o}\matr{L}\matr{E}_{o}^{\top}\matr{y}_{o}^{\parallel} + \mathcal{O}(||\matr{L}||_{2}\eta_{K}H^{2}M^{2}).
\end{align}
\end{proof}

\cornonideal*
\begin{proof}
Combining Lemma \ref{lemma:rip} and \ref{lemma:nonideal}, we obtain the following result: for any $\delta \in (0, 1)$, there exists $t \in (0, 1)$ such that with probability at least $1-\delta$, for any $\matr{y} \in \bbR^{N}$ we have
\begin{align}
    \matr{y}_{o}^{\top}(\matr{E}_{o}\matr{L}\matr{E}_{o}^{\top})\matr{y}_{o}
    &\leq (\y_{o}^{\parallel})^{\top}(\E_{o}\L\E_{o})^{\top}\y_{o}^{\parallel} + {\cal O}(||\L||_{2}\eta_{K}H^{2}M^{2}) \nonumber\\
    &\leq (1+t)\frac{n}{N}\frac{\sigma_{\max}(\matr{L})}{\sigma_{\min}^{+}(\matr{L})}(\matr{y}^{\parallel})^{\top}\matr{L}\matr{y}^{\parallel} \nonumber\\
    &+ \mathcal{O}(||\matr{L}||_{2}\eta_{K}H^2M^2),\label{ineq:rip_combined}
\end{align}
provided that
\begin{align*}
    \frac{n}{N} \geq \frac{3}{t^2}\max_{1 \leq i \leq N}\|\V_{K}^{\top}\e_{i}\|_{2}^{2}\ln\left(\frac{K}{\delta}\right).
\end{align*}
Moreover,
\begin{align}
    \y^{\top}\L\y
    &= (\y^{\parallel})^{\top}\L\y^{\parallel} + 2(\y^{\parallel})^{\top}\L\y^{\perp} + (\y^{\perp})^{\top}\L\y^{\perp}\\
    &= (\y^{\parallel})^{\top}\L\y^{\parallel} +(\y^{\perp})^{\top}\L\y^{\perp}\\
    &\geq (\y^{\parallel})^{\top}\L\y^{\parallel},
\end{align}
where we used the fact that $(\y^{\parallel})^{\top}\L\y^{\perp} = 0$ due to the non-ideal low-pass signal model \eqref{eq:nonideal_model}, and $(\y^{\perp})^{\top}\L\y^{\perp} \geq 0$ as the Laplacian $\L$ is a positive semi-definite matrix.
This allows us to relax \eqref{ineq:rip_combined} as
\begin{align}
    \matr{y}_{o}^{\top}(\matr{E}_{o}\matr{L}\matr{E}_{o}^{\top})\matr{y}_{o}
    &\leq (1+t)\frac{n}{N}\frac{\sigma_{\max}(\matr{L})}{\sigma_{\min}^{+}(\matr{L})}\matr{y}^{\top}\matr{L}\matr{y} \nonumber\\
    &+ \mathcal{O}(||\matr{L}||_{2}\eta_{K}H^2M^2),~\forall \matr{y} \in \bbR^{N}.\label{ineq:rip_combined_relaxed}
\end{align}
Let $\textstyle C(t) := (1+t)\frac{\sigma_{\max}(\L)}{\sigma_{\min}^{+}(\L)}$. Similar to Theorem \ref{thm:main}, we apply \eqref{ineq:rip_combined_relaxed} and obtain
\begin{align*}
    &J_{f}( \Lfull^\star ) \leq J_{f}( \widehat{\Lfull} )\\
    &\leq \frac{N}{ {\rm Tr}( \Lfull_{oo}^\star ) + {\bf 1}^\top \Lfull_{oh}^\star {\bf 1} }\Big( C(t) \frac{n}{N} J_{f}( \Lfull^\star ) + {\cal O}(||\L||_{2}\eta{K} H^2 M^2)\\
    &+ J_{p}( {\rm Diag}( \Lfull_{oh}^\star {\bf 1}) ) \Big)\\
    &= \frac{n}{ {\rm Tr}( \Lfull_{oo}^\star ) + {\bf 1}^\top \Lfull_{oh}^\star {\bf 1} } C(t) J_{f}( \Lfull^\star ) \nonumber\\
    &+ \frac{N}{n}\frac{n}{ {\rm Tr}( \Lfull_{oo}^\star ) + {\bf 1}^\top \Lfull_{oh}^\star {\bf 1} } \Big({\cal O}(||\L||_{2}\eta_{K} H^2 M^2) + J_{p}( {\rm Diag}( \Lfull_{oh}^\star {\bf 1}) ) \Big).
\end{align*}
With $\frac{n}{ {\rm Tr}( \Lfull_{oo}^\star ) + {\bf 1}^\top \Lfull_{oh}^\star {\bf 1} } \leq \frac{1}{c}$ and $-\L_{oh}^{\star}\matr{1} = {\cal O}(\epsilon)$, we have an approximation bound as
\begin{align*}
    J_{f}( {\Lfull}^\star ) \leq J_{f}( \widehat{\Lfull} ) \leq \frac{C(t)}{c}J_{f}( \Lfull^\star ) + {\cal O}\left(\frac{N(||\L||_{2}\eta_{K} H^2M^2 + \epsilon)}{nc}\right).
\end{align*}
Similarly, we can also develop the following bound: 
\begin{align*}
    J_{p}( \Lpart^\star ) \leq J_{p} ( \widetilde{\L}_{p} ) \leq \frac{C(t)}{c}J_{p}( \Lpart^\star ) + {\cal O}\left(\frac{||\L||_{2}\eta_{K}H^{2}M^{2} + \epsilon}{c}\right).
\end{align*}
\end{proof}

\end{document}